\definecolor{SpringGreen4}{HTML}{009b45}
\definecolor{fuchsia(web)}{rgb}{0.65, 0.2, 0.65}
\newcommand{\argmin}{\operatornamewithlimits{argmin}}
\newcommand{\argmax}{\operatornamewithlimits{argmax}}
\newcommand{\E}{\operatornamewithlimits{\mathbb{E}}}
\newcommand{\inner}[2]{\left\langle #1, #2 \right\rangle}
\newcommand{\R}{\mathbb{R}}
\newcommand{\mb}{\mathbf}
\newcommand{\mc}{\mathcal}
\newcommand{\bb}{\mathbb}
\renewcommand{\S}{\mathcal{S}}
\newcommand{\A}{\mathcal{A}}
\newcommand{\dt}{\frac{d}{dt}}
\renewcommand{\P}{\mathbb{P}}
\newcommand{\score}{\Psi}
\theoremstyle{plain}
\newtheorem{theorem}{Theorem}[section]
\newtheorem{corollary}[theorem]{Corollary}
\theoremstyle{definition}
\newtheorem{definition}[theorem]{Definition}
\theoremstyle{remark}
\icmltitlerunning{Learning a Diffusion Model Policy from Rewards via Q-Score Matching}
\def\removelatexerror{\let\@latex@error\@gobble}
\begin{document}

\twocolumn[
\icmltitle{Learning a Diffusion Model Policy from Rewards via Q-Score Matching}

% It is OKAY to include author information, even for blind
% submissions: the style file will automatically remove it for you
% unless you've provided the [accepted] option to the icml2024
% package.

% List of affiliations: The first argument should be a (short)
% identifier you will use later to specify author affiliations
% Academic affiliations should list Department, University, City, Region, Country
% Industry affiliations should list Company, City, Region, Country

% You can specify symbols, otherwise they are numbered in order.
% Ideally, you should not use this facility. Affiliations will be numbered
% in order of appearance and this is the preferred way.
\icmlsetsymbol{equal}{*}

\begin{icmlauthorlist}
\icmlauthor{Michael Psenka}{equal,yyy}
\icmlauthor{Alejandro Escontrela}{equal,yyy}
\icmlauthor{Pieter Abbeel}{yyy}
\icmlauthor{Yi Ma}{yyy}
%\icmlauthor{}{sch}
%\icmlauthor{}{sch}
\end{icmlauthorlist}

\icmlaffiliation{yyy}{Department of Electrical Engineering and Computer Science, University of California, Berkeley}
% \icmlaffiliation{comp}{Company Name, Location, Country}
% \icmlaffiliation{sch}{School of ZZZ, Institute of WWW, Location, Country}

\icmlcorrespondingauthor{Michael Psenka}{psenka@eecs.berkeley.edu}

% You may provide any keywords that you
% find helpful for describing your paper; these are used to populate
% the "keywords" metadata in the PDF but will not be shown in the document
\icmlkeywords{Machine Learning, ICML}

\vskip 0.3in
]

% this must go after the closing bracket ] following \twocolumn[ ...

% This command actually creates the footnote in the first column
% listing the affiliations and the copyright notice.
% The command takes one argument, which is text to display at the start of the footnote.
% The \icmlEqualContribution command is standard text for equal contribution.
% Remove it (just {}) if you do not need this facility.

%\printAffiliationsAndNotice{}  % leave blank if no need to mention equal contribution
\printAffiliationsAndNotice{\icmlEqualContribution} % otherwise use the standard text.

\begin{abstract}
Diffusion models have become a popular choice for representing actor policies in behavior cloning and offline reinforcement learning. This is due to their natural ability to optimize an expressive class of distributions over a continuous space. However, previous works fail to exploit the score-based structure of diffusion models, and instead utilize a simple behavior cloning term to train the actor, limiting their ability in the actor-critic setting. In this paper, we present a theoretical framework linking the structure of diffusion model policies to a learned Q-function, by linking the structure between the score of the policy to the action gradient of the Q-function. We focus on off-policy reinforcement learning and propose a new policy update method from this theory, which we denote \emph{Q-score matching}. Notably, this algorithm only needs to differentiate through the denoising model rather than the entire diffusion model evaluation, and converged policies through Q-score matching are implicitly multi-modal and explorative in continuous domains. We conduct experiments in simulated environments to demonstrate the viability of our proposed method and compare to popular baselines. Source code is available from the project website: \url{https://michaelpsenka.io/qsm}.
\end{abstract}

\section{Introduction}
Reinforcement Learning (RL) has firmly established its importance across a range of complex domains, from discrete game environments like Go, Chess, and Poker \citep{silver2016mastering, silver2017mastering, brown2019superhuman} to continuous environments like goal-oriented robotics \citep{kober2013reinforcement, sunderhauf2018limits, ibarz2021train, Wu22CoRL_DayDreamer}. Robotics RL applications typically need to work in a continuous vector space for both states and actions. This not only makes traditional RL algorithms designed for discrete state and action spaces infeasible, but makes parameterizing the policy (distribution of actions) a difficult challenge, where one must typically choose between ease of sampling (e.g. Gaussians \citep{agostini2010reinforcement}) and expressiveness.

% (e.g. direct deep models, where normalization constants need to be computed explicity).

Recently, diffusion models \citep{hyvarinen2005estimation, ho2020denoising} have emerged as a promising avenue for parameterizing distributions. These models, rooted in the idea of iterative increments of noising and denoising from a distribution, have shown great potential in generative tasks \citep{rombach2022high, watson2022novel}. In the context of RL, diffusion models offer both expressiveness and easy sampling, since normalization constants do not need to be computed for sampling. However, their adoption in RL is relatively nascent, and the nuances of their implementation and performance are still subjects of investigation.

One unexplored approach is through the alignment of the learned policy's score, denoted by $\nabla_a \log(\pi(a|s))$, with the score of an optimal policy, denoted by $\nabla_a \log(\pi^*(a|s))$. However, traditional score matching is ill-posed in this setting, because we not only lack samples from $\nabla_a \log(\pi^*(a|s))$, but also from $\pi^*(a|s)$ itself. Our primary result emphasizes that in the context of RL, one can match the score of $\pi$ to that of $\pi^*$ by iteratively matching the score of $\pi$ to the gradient of the state-action value function with respect to action, $\nabla_a Q^\pi(s, a)$. This offers a new, geometric perspective on policy optimization, where the focus for policy optimization becomes iteratively pushing the vector field $\nabla_a \log(\pi(a|s))$ towards the vector field $\nabla_a Q^\pi(s, a)$. We call this approach \emph{Q-score matching} (QSM).

We then use this novel method on off-policy reinforcement learning scenarios, an important but yet unexplored area for diffusion model policies. Without a fixed distribution to sample from (akin to what is given for behavior cloning or offline RL applications), it is unclear how exactly to train a diffusion model represented policy. We postulate and empirically demonstrate that QSM is a viable algorithm for learning diffusion model policies.

The paper is structured as follows: we begin by establishing a continuous-time formulation of RL via stochastic differential equations to allow score-based analysis while keeping the stochastic flexibility of the RL setting. We then introduce the standard policy gradient for diffusion models in this setting, and afterwards give a theoretical introduction to QSM. Finally, we lay out our empirical framework, results, and the broader implications of our method for learning diffusion model policies in reinforcement learning problems.

% \section{Problem formulation}

% The following objects are of central importance for this paper, and make up the mathematical setting we study. \mpp{just normal MDP stuff, just writing explicitly out here to have notation and in case we want to choose a different formulation (I feel like reduced-order-model-type-beat could be interesting?)}

% \begin{enumerate}
%     \item Let $\S = \R^D$ and $\A = \R^k$ for some natural numbers $D, k \in \Z^+$. We call $\mc S$ the \emph{state space} and $\A$ the \emph{action space}, and likewise any element $s \in \S$ a \emph{state} and any $a\in \A$ an \emph{action}.
%     \item We are typically interested in a \emph{reward function} $r : \S \to \{0, 1\}$, which codifies a set of desirable states, namely $r^{-1}(\{1\})$. Typically the inverse image $r^{-1}(\{1\})$ is quite ``small'' (sparse, low-dimensional, etc.), so our algorithm design should account for this accordingly.
%     \item The state also has ``dynamics'' through the transition probability $\P(s_{t+1} | s, a)$. As the setting is stochastic, we are also interested in learning distributions over the action space: such distributions will be denoted as $\pi$ and called a \emph{policy}.
%     \item Finally, we define a \emph{Markov decision process} by a tuple $(\S, \A, r, \P)$.
% \end{enumerate}

\subsection{Related work}

We now introduce the relevant related work for this paper. The most direct relation is the line of work related to diffusion models in the RL setting, which is relatively new but actively explored. We discuss some particular works related to various parts of this paper's setting, including the action gradient of the Q-function and the score of the policy distribution.

\subsubsection{Diffusion models in reinforcement learning}

Diffusion models, characterized by their incremental noise-driven evolution of data, have found various applications in RL, ranging from imitating expert behavior to optimizing policies using complex action-value functions. The following are some notable examples:

\textbf{Behavior cloning:}
``Behavior cloning'' is a type of imitation learning where an agent tries to mimic the behavior of an expert without explicit reward feedback. Much of the earlier work in diffusion models for policy learning has been for behavior cloning \citep{janner2022planning, reuss2023goal}, as the specifics of the behavior cloning setting (matching a distribution to a given dataset) fit more closely to the original design principle of diffusion models, namely through score matching \citep{hyvarinen2005estimation}. In a similar vein, work has been done on learning a stochastic state dynamics model using a diffusion model \citep{li2022exponential}.

\textbf{Offline Q-learning:}
Offline RL techniques leverage existing datasets to learn optimal policies without further interaction with the environment. Similar to the behavior cloning setting, optimizing the policy using a large fixed dataset fits more closely with the common usage of diffusion models for learning distributions, and in combination with the above policy gradient formula for diffusion models has enabled many recent works in this area \citep{wang2022diffusion, suh2023fighting, kang2023efficient, hansen2023idql, lu2023contrastive}.

\textbf{Policy gradient:}
Policy gradient methods seek to directly optimize the policy by computing gradients of the expected reward with respect to the policy parameters \citep{sutton1999policy}. Previous work has derived a formula for the policy gradient with respect to a diffusion model's parameters \citep{black2023training}, but such formulas are much more general and do not fully exploit the structure of a diffusion model. For example, the new expectation for the policy gradient becomes dependent on internal action samples, making the estimates less sample efficient (see Section \ref{sec:policygrad_intro}).

There are additional works that use the action gradient of the Q-function for learning \citep{silver2014deterministic, berseth2018model, d2020learn, li2022prag, sarafian2021recomposing}, where the standard policy gradient is expanded to include the action gradient of the Q-function through the chain rule, but such methods require an explicit representation for the full policy distribution $\pi(a | s)$, which is not readily available for diffusion models.

\textbf{Diffusion-QL:}
Although Wang et al. \citep{wang2022diffusion} perform experiments in the offline-RL setting with a behavior cloning term, they propose a method for pure Q-learning: training on $Q$ itself as the objective, and backpropagating through the diffusion model evaluation. However, such training still does not utilize the unique structure of diffusion models and presents computational challenges (e.g. exploding/vanishing gradients from differentiating through model applied on itself). In this paper, we explore a method that does not require backpropagating through the entire diffusion evaluation, but rather just the internal denoising model, much like standard diffusion model training.

\subsubsection{Stochastic optimal control}
At a conceptual level, our work is rooted in the principles of stochastic optimal control \citep{fleming2012deterministic, kirk2004optimal, bellman1954dynamic}, which deals with optimizing systems subjected to random disturbances over time. Especially relevant to our context is the continuous-time formulation, where the control strategies are adjusted dynamically in response to evolving system states. However, much of stochastic optimal control literature typically assumes access to some model of the state dynamics. Instead of assuming a state dynamics model, we assume a model for the expected discounted rewards over time (the Q-function), leading to new motivations for the theoretical development of our method. Nonetheless, exploring the link between this paper and stochastic optimal control is an interesting direction for future work, in particular given the surface similarities between Theorems \ref{thm:main_nonstoch} and \ref{thm:main_stoch_max} and the Hamilton-Jacobi-Bellman equation \citep{bellman1954dynamic}.

\section{Problem formulation}\label{sec:problem_formulation}
We now define and introduce core mathematical notation and objects used. Of notable difference from a standard reinforcement learning formulation is the use of a continuous time formulation, which is helpful to simplify the theoretical statements of this paper.

\subsection{Notation}

We first introduce non-standard notation used throughout the main body and proofs in the appendix.
\begin{enumerate}
    \item Different notations for the time derivative of a path $x(t)$ are used, depending on the setting. In the non-stochastic/deterministic setting, we use dot notation $\dot{x}(t) \coloneqq \dt x(t)$. In the stochastic setting, we use the standard SDE notation, using $dx(t)$ instead of $\dt x(t)$.
    \item In the main body and proofs, we often refer to the ``score'' of the action distribution, which is always denoted $\score$ and always used as the vector field defining the flow of actions over time. Not all settings have $\score$ line up with the classical score of a distribution, but we use the terminology ``score'' throughout to highlight the analogy to a distribution's true score $\nabla_a \log \pi(a | s)$, as seen most clearly through the Langevin dynamics with respect to a distribution's score \citep{langevin1908brownian, papanicolaou1977martingale, welling2011bayesian}.
\end{enumerate}

\subsection{Definitions}
We denote the \emph{state space} as a Euclidean space $\mc S = \R^s$, and the \emph{action space} as another Euclidean space $\mc A = \R^a$. For the theoretical portion of this paper, we consider the following stochastic, continuous-time setting for state and action dynamics:
\begin{align}
\begin{split}\label{eq:stochastic_cont_model}
    ds &= F(s, a)dt + \Sigma_s(s, a) dB_t^s,\\
    da &= \score(s, a)dt + \Sigma_a(s, a) dB_t^a,\\
    s(0) &= s_0,\\
    a(0) &= a_0.
\end{split}
\end{align}
$\score : \mc S \times \mc A \to \mc A$ corresponds to the ``score'' of our policy and is the main parameter for policy optimization in this setting. $F : \mc S \times \mc A \to \mc S$ corresponds to the continuous state dynamics, and $\Sigma_s(s, a), \Sigma_a(s, a)$ are functions from $\mc S \times \mc A$ to positive semidefinite matrices in $\R^{s\times s}$ and $\R^{a\times a}$ respectively, corresponding to the covariance structure for the uncertainty in the dynamics of $s(t)$ and $a(t)$. The covariances are with respect to the separate standard Brownian motions $B^s, B^a$, each embedded in $\mc S$ and $\mc A$ respectively.

While many works formulate RL in a continuous state/action/time setting \citep{bradtke1994reinforcement, doya2000reinforcement, jia2022policy, jia2023q, zhao2024policy}, it is less common to formulate the action as a joint dynamical system. This proves important when linking this theoretical structure to a diffusion model, as highlighted in Section \ref{sec:time_discretization}.

Our main objective in this paper is to maximize path integral loss functions of the following form:
\begin{equation}\label{eq:q_def_continuous}
    Q^\score(s_0, a_0) = \E \int_0^\infty \gamma^t r(s(t, s_0, a_0))dt,
\end{equation}
where $r : \mc S \to [0,1]$ is the \emph{reward function}, the expectation is taken over the stochastic dynamics given in \eqref{eq:stochastic_cont_model}, $s(t, s_0, a_0)$ is a sample of the path $s(\cdot)$ at time $t$ from initial conditions $(s_0, a_0)$, and $\gamma \in (0, 1)$ is a fixed constant corresponding to a ``discount factor''. Discretizing by time gives a more familiar formula for the Q-function:
\begin{equation}
    Q^\score(s_0, a_0) = \E \sum_{t=0}^\infty \gamma^t r(s(t, s_0, a_0)),
\end{equation}
and furthermore if we cut off at some horizon where $\sum_{T+1}^\infty\gamma^i \approx 0$:
\begin{equation}
    Q^\score(s_0, a_0) = \E \sum_{t=0}^T \gamma^t r(s(t, s_0, a_0)).
\end{equation}
We write superscript $\score$ because we want to consider $Q$ as a function not of the initial conditions, but of the score $\score$, and try to find a score $\score^*$ that maximizes $Q^{\score}(s_0, a_0)$ for a fixed initial condition (or an expectation over a distribution of initial conditions).

\textbf{Our objective} in this paper is then to maximize the following function with respect to the vector field $\score: \S \times \A \to \A$ for some distribution over initial state/action pairs $\bb P \times \pi$:

\begin{equation}
    J(\score) = \E_{\bb P \times \pi}Q^\score(s,a).
\end{equation}

\subsection{Time discretization}\label{sec:time_discretization}
One may observe that the action model in \eqref{eq:stochastic_cont_model} is not standard, since actions $a(t)$ are modeled as a smooth flow over time, rather than an explicit function of the state $s(t)$. The motivation for this model comes when we discretize in time via the Euler-Maruyama method, and further when we discretize at different time scales with respect to the state and action dynamics in the following way:
\begin{align}
\begin{split}\label{eq:discretize_stoch_dynamics}
    s_{t+1} &= s_t + F(s_t, a_t) + z, z \sim \mc N(\mb 0, \Sigma_s(s_t, a_t)),\\
    a_{t}^i &= a_t^{i-1} + \frac{1}{K}\score(s_t, a_t^{i-1}) + z_t^i, \\
    z_t^i &\sim \mc N(\mb 0, \frac{1}{K}\Sigma_a(s_t, a_t^{i-1})),\\
    a_{t+1} &= a_t^K,
\end{split}
\end{align}
where $a_t^i \coloneqq a_{t+i/K}$. By discretizing the action dynamics at $K$ times the fidelity of the state dynamics discretization, we recover a score-based generative model; this is analogous to a time-invariant diffusion model of depth $K$, with denoising mean represented by $\score$ and variance represented by $\Sigma_a$. As a result, we expect the theory on the model in \eqref{eq:stochastic_cont_model} to also approximately hold for systems of the form given in \eqref{eq:discretize_stoch_dynamics} where actions are represented by a diffusion model, up to the error incurred by time discretization. This time scaling also allows actions to depend on the state $s_t$ more than the previous action $a_{t-1}$; as the number of sampling steps $K$ increases, the initial condition $a_{t-1}$ becomes less influential.
\section{Policy gradient for diffusion policies}\label{sec:policygrad_intro}

Recall that the policy gradient, the gradient of the global objective $J(\theta) = \E_{s, a} Q^{\pi}(s, a)$ is given by the following \citep{sutton1999policy}:
\begin{equation} \label{eq:policy_grad_formula}
\nabla_\theta J(\theta) = \E_{(s_0, a_0, s_1, a_1, \ldots)}\sum_{t=1}^\infty Q(s_t, a_t)\nabla_\theta \log \pi_\theta(a_t | s_t),
\end{equation}
where the distribution of the expectation is with respect to the policy $\pi$ and the environment dynamics. For a time-discretization of a diffusion model policy, we do not have access to global probability $\pi(a|s)$, but rather just the incremental steps $\pi(a^\tau | a^{\tau - 1}, s)$ (superscript here indicates internal diffusion model steps), and would need to integrate over all possible paths to get the following:
\begin{align}
&\pi(a^K|s) = \label{eq:action_expanded}\\
&\quad \int_{a^{K-1}} \cdots \int_{a^1} \pi(a^1 | s)\prod_{\tau = 2}^K\pi(a^\tau | a^{\tau - 1}, s)da^1 \cdots da^{K-1},\nonumber
\end{align}
which is quickly infeasible to compute with respect to $K$. However, after substituting \eqref{eq:action_expanded} into \eqref{eq:policy_grad_formula}, we can simplify and obtain the following form:
\begin{equation}\label{eq:policy_gradient_main}
    \nabla_\theta J(\theta) = \E\sum_{t=1}^\infty Q(s_t, a_t^K) \left( \sum_{\tau=1}^K\nabla_\theta\log \pi_\theta(a_t^\tau | a_t^{\tau - 1}, s_t)\right),
\end{equation}
where the expectation is taken over the all states $\{s_t\}_{t=1}^\infty$ and all internal actions $\{\{a_t^\tau\}_{\tau=1}^K\}_{t=1}^\infty$. 
Proof of this can be found in Appendix \ref{sec:proof_policygrad}. Importantly, the expectation now explicitly depends on the internal action states $\{a_t^\tau\}_{\tau=1}^K$ rather than just the executed action $a_t$, and uses no explicit structure of diffusion models, as this formula likewise holds for any model with internal actions $\{a_t^\tau\}_{\tau=1}^K$. It is thus reasonable to expect that applying this formula directly will lead to sample inefficiency; we present comparative results in Section \ref{sec:experiments}. We are then motivated to pursue alternative methods for updating the diffusion model policy, that use some of the score-based structure of the diffusion model.

% \subsection{A note on stability}
% While \eqref{eq:policy_gradient_main} gives a convenient alternative expectation for the policy gradient, note that we've added an extra factor of dimensionality in the expectation on the scale of $K |\mc A|$, where $|\mc A|$ is the dimension of the action space. For diffusion models, we expect $K$ to be relatively large. While the scores for the original objective $J(\theta) = \E_{s, a} Q(s, a)$ are invariant to the inner steps (thus finite-sample estimates don't scale with this dimensionality), the internal gradient estimates for $\nabla_\theta J(\theta)$ given in \eqref{eq:policy_gradient_main} \emph{do depend on internal action sequences}, thus we should expect this extra dimensionality to increase the variance of finite-sample estimations of $\nabla_\theta J(\theta)$.

\section{Policy optimization via matching the score to the Q-function}

We now introduce the main theoretical contribution of this work: an alternative method for updating the policy given an estimate of the $Q$-function that avoids the aforementioned problems from policy gradients.

The theory of this section is relevant for continuous-time dynamical systems. Thus, we consider the continuous-time $Q$-function given in eq. \eqref{eq:q_def_continuous} and the corresponding total energy function $J(\score)$:
\begin{equation}\label{eq:main_loss_continuous}
    J(\score) = \E_{(s, a)} Q^{\score}(s, a),
\end{equation}
where the expectation for $Q^\score$ is taken over sampled paths of $(s(t), a(t))$ with respect to the stochastic dynamics from \eqref{eq:stochastic_cont_model} starting from $(s_0, a_0)$. In computation, we can also consider a finite-dimensional parameterization $\score_\theta$ for parameters $\theta \in \R^d$ and the parameterized loss:
\begin{equation}\label{eq:main_loss_continuous}
    J(\theta) = \E_{(s, a)} Q^{\score_\theta}(s, a),
\end{equation}
where the parameters $\theta$ parameterize the vector field $\score$ for the actions given in \eqref{eq:stochastic_cont_model}.

Our goal is to match the score $\score$ with that of an optimal distribution with respect to $J$, denoted $\score^*$. However, unlike the standard score matching literature, we not only do not have access to any samples of $\score^*$, but not even from the action distribution $\pi^*$ it generates. Thus, our matching approach will require access to a surrogate $\score^*$ approximator.

One hypothesis from dimensional analysis is to compare $\score$ to $\nabla_a Q^\score$, which is also a vector field from $\mc S \times \mc A$ to $\mc A$. Intuitively, $\nabla_a Q^\score$ provides which ways actions should be modified locally to maximize expected reward. As such, we define $\score^* \approx \nabla_a Q$ and define our actor update as an iterative optimization of the score $\score$ against the $\nabla_a Q$ target.

% , so we should optimally be pushing actions in the directions of $\nabla_a Q$, or in other words, aligning $\score$ to $\nabla_a Q$ in order to align $\score$ to $\score^*$.

One can optimize $\score$ by iteratively matching it to the action gradient of its $Q$-function, $\nabla_a Q^\score(s, a)$. Our theoretical statements for this are captured in Theorems \ref{thm:main_nonstoch} and \ref{thm:main_stoch_max}, and their proofs in appendix \ref{sec:proof_nonstochopt} and \ref{sec:proof_stoch_main} respectively. Gradients with respect to this optimization would only depend on the denoising model $\score$ itself, rather than the full diffusion model evaluation.

% Indeed, we can, and the main message of the theory: one can optimize $\score$ by iteratively matching it to the action gradient of its $Q$-function, $\nabla_a Q^\score(s, a)$. As the proofs are rather involved, we leave them in the appendix. However, we recommend that the interested reader go over the proofs, as they give intuition for the reasoning and validity of the below theorems.

\begin{figure}\label{fig:continuous_diagram}
    \centering
    
    % First row
    \begin{subfigure}
    \centering
        \includegraphics[width=0.48\linewidth]{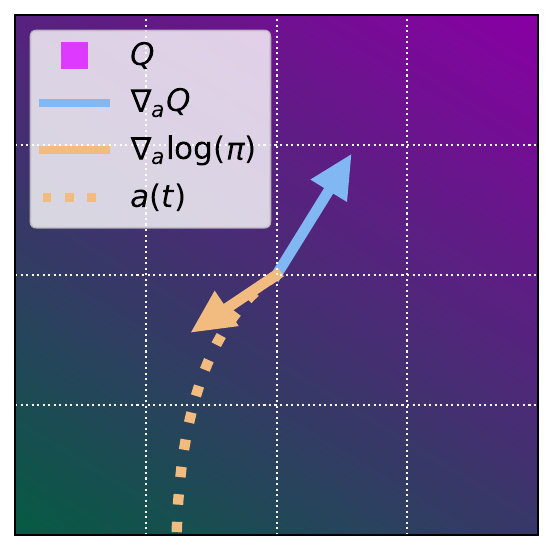}
        % \caption{Random initialized $\score^0$.} 
    \end{subfigure}
    \hfill
    \begin{subfigure}
    \centering
        \includegraphics[width=0.48\linewidth]{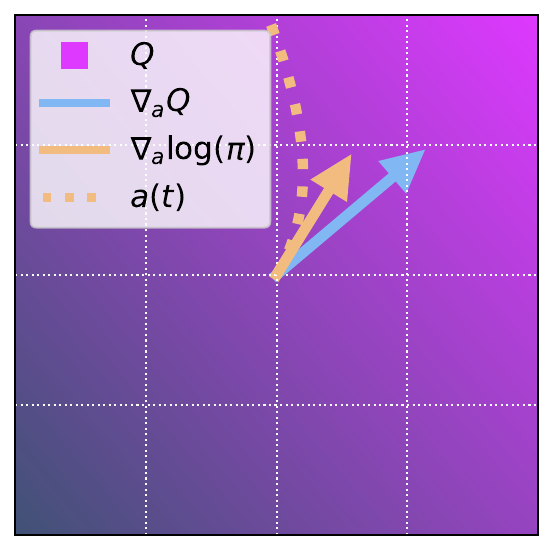}
        % \caption{One step of QSM $\score^1$.}
    \end{subfigure}
    
    \caption{A visual description of Theorem \ref{thm:main_nonstoch} and Theorem \ref{thm:main_stoch_max}, and the implied update rule for a policy $\pi$ parameterized by a diffusion model. The left image depicts a randomly initialized score $\score^0$, and the right the result after one step of QSM $\score^1$. If there is any discrepancy between the score $\nabla_a \log(\pi(a|s))$ (orange vector, denoted $\score$ in the paper and optimized directly) and the action gradient $\nabla_a Q(s, a)$ (blue vector), we can forcefully align the score to the $Q$ action gradient to strictly increase the $Q$ value at $(s, a)$.}
\end{figure}

\subsection{Non-stochastic setting}

We begin by proving the above result in the non-stochastic setting, where the state and action both evolve as paired ordinary differential equations. To further simplify, we will begin by trying to optimize just the $Q$-function at a fixed point $Q(s_0, a_0)$, or without loss of generality of the theory, the $Q$-function at the origin $Q(\mb 0, \mb 0)$. This is still quite a rich setting, since even for a fixed point $Q(s, a)$ is still a path integral equation. The following theorem formalizes the setting, and gives a concrete statement relating the optimal policy's score to the action gradient of the $Q$-function.

\begin{theorem}[Optimality condition, deterministic setting]\label{thm:main_nonstoch}
    Consider the following joint deterministic dynamics governing the state $s(t) \in \R^s$ and action $a(t) \in \R^a$:
    \begin{align}
    \begin{split}\label{eq:sys_nonstoch}
        &\begin{aligned}
            \dot{s}(t) &= F(s(t), a(t)), & \dot{a}(t) &= \score(s(t), a(t)), 
        \end{aligned} \\
        &\begin{aligned}
            s(0) &= \mb 0, & a(0) &= \mb 0.
        \end{aligned}
    \end{split}
    \end{align}
    where $s(t) \in \R^s, a(t) \in \R^a$, $\|\score(s, a)\|_2 \le C$ for all $(s, a)$, and $\score$ is Lipschitz with respect to $\|\cdot \|_2$. Denote $s(t, s_0, a_0)$ the resulting state $s(t)$ from initial conditions $s_0, a_0$. Let $r: \R^s \to [0, 1]$ be a smooth function. Finally, let $\score^*: \R^s \times \R^a \to \R^a$ be a vector field that maximizes the following path norm for fixed $\gamma \in (0,1)$ over smooth flows $\score$ with norm bound $C$ and bounded Lipschitz constant:
    \begin{align}
        J(\score) &= Q^\score(\mb 0,\mb 0),\\
        Q^\score(s_0, a_0) &= \int_0^\infty \gamma^t r(s(t, s_0, a_0))dt,
    \end{align}
    it follows that $\score^*(s, a) = \alpha_{s, a}\nabla_a Q^{\score^*}(s, a)$ for some $\alpha_{s, a} > 0$ for all $(s, a)$ along the trajectory $(s(t), a(t))$ where $\nabla_a Q^{\score^*}(s, a) \ne \mb 0$.
\end{theorem}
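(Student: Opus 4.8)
The plan is to read this as a first-order (Pontryagin-type) optimality condition and prove it by a needle-variation argument, exploiting the structural fact that the sensitivity of the objective to a perturbation of the action is \emph{exactly} $\nabla_a Q^{\score^*}$. The key observation is that since $J(\score) = Q^\score(\mb 0,\mb 0)$ and $Q^\score$ is itself the discounted path integral generated by the feedback field $\score$, the gradient $\nabla_a Q^{\score^*}$ already encodes the full downstream effect of nudging an action. This lets me avoid solving adjoint/costate ODEs and instead read the first variation off the definition of $Q$ directly, which is also why the conclusion is naturally phrased in terms of $\nabla_a Q^{\score^*}$.

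First I would derive the first-variation formula. Fix the optimal field $\score^*$ with trajectory $(s^*(t),a^*(t))$ and perturb the action at a single time $t_0$ by an infinitesimal $\delta$, leaving the feedback field $\score^*$ unchanged downstream. Since the reward before $t_0$ is untouched and the future evolves under $\score^*$ from the shifted point $(s^*(t_0),a^*(t_0)+\delta)$, the change in $J$ is $\gamma^{t_0}\bigl[Q^{\score^*}(s^*(t_0),a^*(t_0)+\delta)-Q^{\score^*}(s^*(t_0),a^*(t_0))\bigr] = \gamma^{t_0}\nabla_a Q^{\score^*}(s^*(t_0),a^*(t_0))\cdot\delta + o(\norm{\delta})$. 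Superposing these needle variations across a distributed perturbation $\score^*+\epsilon h$ (the induced perturbation of $s$ and the interaction between injections are $O(\epsilon^2)$ and drop out to first order) yields $\frac{d}{d\epsilon}J(\score^*+\epsilon h)\big|_{0} = \int_0^\infty \gamma^t\,\nabla_a Q^{\score^*}(s^*(t),a^*(t))\cdot h(s^*(t),a^*(t))\,dt$.

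Next I would invoke optimality: for every feasible direction $h$ (one keeping $\norm{\score^*+\epsilon h}\le C$ for small $\epsilon>0$), maximality of $J$ forces this derivative to be $\le 0$. Localizing with perturbations whose restriction to the trajectory concentrates in time near an arbitrary $t_0$, and using continuity of $t\mapsto \nabla_a Q^{\score^*}(s^*(t),a^*(t))$, the integral inequality reduces to the pointwise statement $\nabla_a Q^{\score^*}(x_0)\cdot\eta\le 0$ for every admissible direction $\eta$ at each trajectory point $x_0$. Where $\norm{\score^*(x_0)}<C$ every $\eta$ is admissible, forcing $\nabla_a Q^{\score^*}(x_0)=\mb 0$; hence wherever $\nabla_a Q^{\score^*}(x_0)\neq\mb 0$ the norm bound must be active, $\norm{\score^*(x_0)}=C$, and the admissible directions are exactly the half-space $\{\eta:\score^*(x_0)\cdot\eta\le 0\}$. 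Requiring $\nabla_a Q^{\score^*}(x_0)\cdot\eta\le 0$ over this whole half-space is equivalent to $\nabla_a Q^{\score^*}(x_0)=\lambda\,\score^*(x_0)$ with $\lambda\ge 0$, and nonzeroness of the gradient gives $\lambda>0$; setting $\alpha_{s,a}=1/\lambda>0$ yields $\score^*(x_0)=\alpha_{s,a}\nabla_a Q^{\score^*}(x_0)$, as claimed.

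The main obstacle I anticipate is making the localization rigorous within the admissible class. Two points need care: (i) justifying differentiability of the flow in $\epsilon$ and the $O(\epsilon^2)$ control on cross-terms, which rests on the norm and Lipschitz bounds on $\score^*$ — forward uniqueness of the ODE guarantees the trajectory is non-self-intersecting, so a time-localized profile $\psi(t)\eta$ along the trajectory can be realized by an admissible field $h$; and (ii) reconciling the concentrating perturbations with the ``bounded Lipschitz constant'' requirement, since if that bound is a hard active constraint rather than a regularity condition, one must restrict to variations respecting it or argue the optimum's Lipschitz constant is slack, so that the norm bound $C$ is the only binding constraint in the cone argument. The remaining convex/cone geometry is routine.
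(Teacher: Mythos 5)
Your route is genuinely different from the paper's. The paper proves the contrapositive by explicit construction: it blends $\score$ with $\nabla_a Q$ through a partition of unity on an $\epsilon$-ball (their eq.\ for $\score'$), Taylor-expands $Q$ and $r$ locally, and then spends the bulk of the proof controlling what happens when the trajectory re-enters that ball (separate cases for no re-entry, finitely many, and infinitely many re-entries). You instead run a first-order/KKT argument: a first-variation formula $\frac{d}{d\epsilon}J(\score^*+\epsilon h)\big|_{0} = \int_0^\infty \gamma^t\,\nabla_a Q^{\score^*}(x^*(t))\cdot h(x^*(t))\,dt$, a pointwise variational inequality obtained by localization, and cone geometry for the pointwise ball constraint. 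Where your argument closes, it is cleaner, and it even yields a conclusion the paper never states: wherever $\nabla_a Q^{\score^*}\ne\mb 0$ along the trajectory, the norm constraint must be active, $\|\score^*\|=C$. Your worry (ii) about the Lipschitz class is resolved the same way the paper resolves it (the admissible class is ``Lipschitz with \emph{some} finite constant,'' per the paper's closing remark, so adding a Lipschitz bump stays admissible), and your implicit assumption that $\nabla_a Q^{\score^*}$ exists and is continuous is equally implicit in the paper's Taylor expansions.

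There are, however, two concrete soft spots, both repairable inside your framework. First, the claim that forward uniqueness makes the trajectory non-self-intersecting, so that a \emph{time}-localized profile $\psi(t)\eta$ can be realized by a spatially supported field $h$, is false: autonomous systems admit closed orbits, and even aperiodic trajectories can re-enter any spatial neighborhood of $x_0$ infinitely often --- this re-entry phenomenon is precisely what consumes most of the paper's proof. The repair is to abandon time-localization: take $h=\psi\,\eta$ with $\psi\ge 0$ supported in a small ball $B$ around $x_0$, choose $\eta$ strictly inside the feasible cone with $\nabla_a Q^{\score^*}(x_0)\cdot\eta>0$, and shrink $B$ until continuity gives $\nabla_a Q^{\score^*}(x)\cdot\eta>0$ (and, at active points, $\score^*(x)\cdot\eta<0$) for all $x\in B$; then \emph{every} visit to $B$ contributes non-negatively and the visit near $t_0$ contributes strictly positively, which is all you need. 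Second, the superposition-of-needles justification (``cross terms are $O(\epsilon^2)$'') is not uniform over an infinite horizon: trajectory deviations can grow like $e^{Lt}$, so the error $\int_0^\infty\gamma^t\,O(\epsilon^2 e^{2Lt})\,dt$ need not even converge. The clean fix avoids linearizing the flow entirely: since $Q^{\score^*}$ satisfies the transport equation $\log(\gamma)\,Q^{\score^*}+\nabla_s Q^{\score^*}\cdot F+\nabla_a Q^{\score^*}\cdot\score^*=-r$, integrating $\frac{d}{dt}\big[\gamma^t Q^{\score^*}(x'(t))\big]$ along the \emph{perturbed} trajectory $x'$ yields the exact identity $Q^{\score'}(\mb 0,\mb 0)-Q^{\score^*}(\mb 0,\mb 0)=\int_0^\infty\gamma^t\,\nabla_a Q^{\score^*}(x'(t))\cdot\big(\score'-\score^*\big)(x'(t))\,dt$, to which the sign argument above applies verbatim (strict positivity of the near-$t_0$ contribution follows from Gronwall on a compact time interval). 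With these two patches your proof is complete and, in my view, tighter than the paper's.
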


Proof can be found in Section \ref{sec:proof_nonstochopt}. \emph{At their core, Theorems \ref{thm:main_nonstoch} and \ref{thm:main_stoch_max} state the following}: if there is anywhere that $\score(s, a)$ and $\nabla_a Q^\score(s, a)$ are not aligned, then a new vector field $\score'$ resulting from aligning $\score(s, a)$ with $\nabla_a Q^\score(s, a)$ will strictly increase $Q^{\score'}(\mb 0, \mb 0)$ towards $Q^*(\mb 0, \mb 0)$. This gives formality to the intuition that the functional gradient with respect to the underlying score model $\score$ is proportional to the action gradient $\nabla_a Q$.

As one can imagine, once we extend \Cref{thm:main_nonstoch} to losses of integrals over $Q$, the collinear condition extends to the entire integrated space.

\begin{corollary}\label{corr:nonstoch_dist}
    Consider the setting of Theorem \ref{thm:main_nonstoch}, but with the following modified loss:
    \begin{equation}
        J(\score) = \E_{\bb P \times \pi}Q^\score(s,a),
    \end{equation}
    where $\bb P \times \pi$ is a probability measure over $\R^s \times \R^a$ absolutely continuous with respect to the Lebesgue measure. For any maximizer $\score^*$, it follows that $\score^*(s, a) = \alpha_{s, a} \nabla_a Q^{\score^*}(s, a)$ for all $(s, a) \in \R^s \times \R^a$ where $\nabla_a Q^{\score^*}(s, a) \ne \mb 0$.
\end{corollary}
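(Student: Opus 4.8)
The plan is to argue by contradiction, localizing the perturbation argument that already underlies \Cref{thm:main_nonstoch}. Suppose $\score^*$ maximizes $J(\score) = \E_{\bb P \times \pi} Q^\score(s,a)$ but there is a point $(s_1, a_1)$ with $\nabla_a Q^{\score^*}(s_1, a_1) \ne \mb 0$ at which $\score^*(s_1, a_1)$ is not a positive multiple of $\nabla_a Q^{\score^*}(s_1, a_1)$. Since $\score^*$ is Lipschitz and $\nabla_a Q^{\score^*}$ is continuous, this misalignment persists on a whole neighborhood $U$ of $(s_1, a_1)$: there the component of $\nabla_a Q^{\score^*}$ orthogonal to $\score^*$ (or, if the norm bound $C$ is not saturated, the full gradient) is bounded away from zero. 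The goal is then to rotate/rescale $\score^*$ inside $U$ toward $\nabla_a Q^{\score^*}$, exactly as in the proof of \Cref{thm:main_nonstoch}, and to show this strictly increases $J$.

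\emph{Key reduction.} The value $Q^\score(s_0, a_0)$ depends on $\score$ only through its restriction to the trajectory emanating from $(s_0, a_0)$. Hence a perturbation $\delta$ supported in $U$ affects $Q^\score(s_0, a_0)$ only for those initial conditions whose trajectory meets $U$, and for each such trajectory the local computation in \Cref{thm:main_nonstoch} shows that aligning $\score^*$ with $\nabla_a Q^{\score^*}$ in $U$ does not decrease $Q^\score(s_0, a_0)$, strictly increasing it whenever the trajectory spends time in $U$ with $\nabla_a Q^{\score^*} \ne \mb 0$. In particular, every initial condition starting inside $U$ has its trajectory begin in $U$ and so enjoys a strict first-order gain. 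Because $U$ has positive Lebesgue measure and $\bb P \times \pi$ is absolutely continuous, it assigns positive mass to a subset of $U$ on which the density is positive (provided $(s_1, a_1)$ is chosen in the support), so integrating over initial conditions gives $J(\score^* + \delta) > J(\score^*)$, contradicting optimality. Equivalently, the constrained functional gradient of $J$ at a point factors as an occupation density times $\nabla_a Q^{\score^*}$, and optimality under the norm bound $C$ then pins $\score^*$ to the gradient direction wherever both factors are nonzero; this forces $\score^*(s,a) = \alpha_{s,a} \nabla_a Q^{\score^*}(s,a)$ with $\alpha_{s,a} > 0$ throughout the support.

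To upgrade ``on the support'' to ``for all $(s,a) \in \R^s \times \R^a$'', I would use that \Cref{thm:main_nonstoch} yields the collinearity along the \emph{entire} trajectory of each initial condition, not merely at its starting point. Since $\bb P \times \pi$ is full-dimensional, the trajectories from its positive-measure support sweep out a dense set, and continuity of both $\score^*$ and $\nabla_a Q^{\score^*}$ propagates the condition to every point where $\nabla_a Q^{\score^*} \ne \mb 0$.

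The main obstacle is the coupling between the perturbation and the flow: modifying $\score$ inside $U$ also alters which trajectories enter $U$, so the clean statement ``only trajectories meeting $U$ are affected'' holds only to first order and must be made rigorous via a sensitivity (adjoint/costate) analysis of the path integral, controlling the second-order feedback of the perturbed dynamics on the occupation of $U$. A secondary difficulty is measure-theoretic: guaranteeing that the pointwise gains integrate to a strict improvement despite possibly vanishing density, and justifying the passage from an almost-everywhere statement on the support to the claimed pointwise conclusion on all of $\R^s \times \R^a$, which implicitly relies on the trajectories being able to reach the relevant points.
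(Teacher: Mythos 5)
Your core mechanism is the same as the paper's: nudge $\score^*$ toward $\nabla_a Q^{\score^*}$ on a neighborhood $U$ of a misaligned point, note that initial conditions in $U$ gain strictly while no initial condition loses (by the ``aligning beyond the origin'' analysis of \Cref{thm:main_nonstoch}), and contradict maximality by integrating against $\bb P \times \pi$. The genuine gap is in your upgrade from ``on the support'' to ``for all $(s,a) \in \R^s \times \R^a$.'' Your proposed route fails twice over: first, trajectories emanating from the support need not sweep out a dense set (e.g.\ a compactly supported $\bb P \times \pi$ with dynamics that remain in a bounded region), so density is unjustified; second, even granting density, passing to the limit in identities $\score^*(s_n,a_n) = \alpha_n \nabla_a Q^{\score^*}(s_n,a_n)$ with $\alpha_n > 0$ only yields $\alpha \ge 0$ at the limit point, and $\alpha = 0$ (i.e.\ $\score^* = \mb 0$ where $\nabla_a Q^{\score^*} \ne \mb 0$) is exactly a configuration the theorem's perturbation shows to be improvable, so positivity cannot be recovered by continuity. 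The paper does not transport the conclusion along trajectories at all: it re-centers the perturbation argument at the arbitrary misaligned point $(s,a)$ itself, so the strict increase of $Q$ occurs on a measure-positive neighborhood of \emph{that} point while $Q$ decreases nowhere, and the contradiction needs only that $\bb P \times \pi$ charges that neighborhood. Reachability of $(s,a)$ from the support plays no role. (Your instinct that a vanishing density is a problem is sound --- the hypothesis is effectively being used as ``the measure charges every open set'' --- but the repair is re-centering, not trajectory transport.)

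Separately, the ``main obstacle'' you cite (needing an adjoint/costate sensitivity analysis because the perturbation alters which trajectories enter $U$) is a non-issue. Since $\score' = \score$ outside $U$ and both fields are Lipschitz, any trajectory that never meets $U$ under $\score$ remains in the closed set where the two fields coincide, hence solves the $\score'$-ODE as well, and by uniqueness of solutions it is \emph{exactly} --- not to first order --- unchanged under $\score'$. The only delicate part is the behavior of trajectories that do meet $U$ after they exit (and possibly re-enter), and that is precisely the re-entry case analysis already carried out inside the proof of \Cref{thm:main_nonstoch}, to which your proposal correctly defers.
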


\subsection{Stochastic setting}

We now extend to stochastic differential equations over the actions, as modeled by \eqref{eq:stochastic_cont_model}. Fortunately, the theory for the deterministic case in Theorem \ref{thm:main_nonstoch} extends rather identically to the fully stochastic case.

\begin{theorem}[Optimality condition, stochastic setting]\label{thm:main_stoch_max}
    Consider the following joint stochastic dynamics governing the state $s(t) \in \R^s$ and action $a(t) \in \R^a$:
    \begin{align}
    \begin{split}\label{eq:proof_a_stoch}
        ds(t) &= F(s(t), a(t))dt + \Sigma_s(s(t), a(t)) dB_t^s,\\
        da(t) &= \score(s(t), a(t))dt + \Sigma_a(s(t), a(t)) dB_t^a,\\
        s(0) &= s_0,\\
        a(0) &= a_0,
    \end{split}
    \end{align}
    where $\score, F$ are globally Lipschitz functions defined from $\R^s \times \R^a$ to $\R^a$, $\Sigma_s(s, a)$ is a globally Lipschitz function from $\R^s \times \R^a$ to positive semidefinite matrices in $\R^{s\times s}$ (and similarly for $\Sigma_a(s, a)$), and $B_t^s, B_t^a$ are separate standard Brownian motions in $\R^s$ and $\R^a$. Further consider the following loss function with respect to the vector field $\score$:
    \begin{align}
        J(\score) &= \E_{\bb P \times \pi}Q^\score(s,a),\\
        Q^\score(s_0, a_0) &= \E \int_0^\infty \gamma^t r(s(t, s_0, a_0))dt,
    \end{align}
    where $\bb P \times \pi$ is a probability measure over $\R^s \times \R^a$, and the expectation for $Q^\score$ is taken over sampled paths of $(s(t), a(t))$ starting from $(s_0, a_0)$. For any optimal vector field $\score^*$ with respect to $J$, it follows that $\score^*(s, a) = \alpha_{s, a}\nabla_a Q^{\score^*}(s, a)$ for some $\alpha_{s, a} > 0$ for all $(s, a) \in \R^s \times \R^a$ where $\nabla_a Q^{\score^*}(s, a) \ne \mb 0$ in the support of the distribution generated by $\bb P, \pi$, and the stochastic dynamics of $(s(t), a(t))$.
\end{theorem}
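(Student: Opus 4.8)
The plan is to reproduce the variational logic of Theorem~\ref{thm:main_nonstoch} in the stochastic setting, replacing the pathwise trajectory sensitivity by a stochastic first-variation (performance-difference) identity, and then to localize the resulting variational inequality to a pointwise condition. Throughout I write $x=(s,a)\in\R^s\times\R^a$ for the joint process and $\lambda=-\log\gamma>0$, and I work in the admissible class of the deterministic theorem, namely globally Lipschitz drifts carrying the norm bound $\norm{\score(x)}_2\le C$; this bound is exactly what will produce a positive scalar $\alpha_{s,a}$, since an unconstrained $\score$ could be scaled in the $\nabla_a Q$ direction without bound. For fixed admissible $\score$ the joint diffusion has generator $\mathcal{L}^\score V = F\cdot\nabla_s V + \score\cdot\nabla_a V + \mathcal{D}V$, where $\mathcal{D}$ collects the (score-independent) second-order terms built from $\Sigma_s,\Sigma_a$, and $Q^\score(x)=V^\score(x)$ is the Feynman--Kac solution of the stationary evaluation equation $\mathcal{L}^\score V^\score-\lambda V^\score + r = 0$; boundedness of $r$ together with $\lambda>0$ guarantees convergence of the defining integral and of all objects below.

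The crucial structural fact is that $\score$ enters $\mathcal{L}^\score$ only through the linear term $\score\cdot\nabla_a V$. First I would fix an optimizer $\score^*$ and perturb it to $\score_\epsilon=\score^*+\epsilon\eta$ for a bounded Lipschitz field $\eta$ chosen so that $\score_\epsilon$ stays admissible. Applying It\^o's formula to $t\mapsto e^{-\lambda t}V^{\score^*}(x(t))$ along the perturbed dynamics, letting the horizon tend to infinity, and substituting the evaluation equation for $V^{\score^*}$ telescopes the reward and yields the exact performance-difference identity
\begin{equation*}
J(\score_\epsilon)-J(\score^*)=\epsilon\,\E\!\int_0^\infty e^{-\lambda t}\,\eta(x(t))\cdot\nabla_a Q^{\score^*}(x(t))\,dt,
\end{equation*}
where the expectation is over $x_0\sim\bb P\times\pi$ and the perturbed $\score_\epsilon$-dynamics. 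Dividing by $\epsilon$ and letting $\epsilon\to 0^{+}$, so that the perturbed law converges to the $\score^*$-law, gives the one-sided directional derivative
\begin{equation*}
\left.\frac{d}{d\epsilon}\right|_{0^{+}}J(\score_\epsilon)=\int \eta(x)\cdot\nabla_a Q^{\score^*}(x)\,d\rho(x),
\end{equation*}
where $\rho$ is the discounted occupation measure of the $\score^*$-dynamics started from $\bb P\times\pi$, whose support is precisely the set named in the theorem.

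To finish, optimality of $\score^*$ forces this directional derivative to be $\le 0$ for every admissible direction $\eta$. I would then localize: around any point $x_0\in\operatorname{supp}\rho$ with $\nabla_a Q^{\score^*}(x_0)\ne\mb 0$, take $\eta$ to be a smooth bump supported in a shrinking neighborhood of $x_0$ pointing in a chosen feasible direction. Since $\rho$ charges every neighborhood of $x_0$, the inequality localizes to the statement that $v=\score^*(x_0)$ maximizes the linear functional $v\mapsto v\cdot\nabla_a Q^{\score^*}(x_0)$ over the ball $\norm{v}_2\le C$. The maximizer of a nonzero linear functional over a ball is the boundary point $v=C\,\nabla_a Q^{\score^*}(x_0)/\norm{\nabla_a Q^{\score^*}(x_0)}_2$, which is exactly $\score^*(x_0)=\alpha_{x_0}\nabla_a Q^{\score^*}(x_0)$ with $\alpha_{x_0}=C/\norm{\nabla_a Q^{\score^*}(x_0)}_2>0$.

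The hard part will be the analytic justification of the first-variation identity: establishing enough regularity of $V^{\score^*}$ (ideally $C^2$, otherwise arguing through mollified or viscosity solutions) to apply It\^o's formula, and controlling integrability and the exchange of limit and expectation as $\epsilon\to 0$ — here $\lambda>0$ and the boundedness of $r$ are what rescue convergence. The second delicate point is the localization step under the Lipschitz constraint: admissible perturbations are coupled across nearby points, so the bump fields must be taken with shrinking support and bounded slope, and one must verify that at $x_0$ the binding constraint is the norm bound $\norm{\score^*}_2\le C$ rather than the Lipschitz bound, so that the ball-maximization argument genuinely applies. Everything else is a faithful transcription of the deterministic argument of Theorem~\ref{thm:main_nonstoch}.
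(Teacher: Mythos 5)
Your proposal is correct in substance, but it takes a genuinely different route from the paper's proof. The paper argues by contraposition with a \emph{finite} perturbation: it expands the recursion $Q(\mb 0,\mb 0)=\E\int_0^\tau \gamma^t r\,dt+\gamma^\tau\,\E\,Q(s(\tau),a(\tau))$ under explicit smallness conditions on $(\epsilon,\tau)$ (mean expansions $\E\,a(\tau')=\tau'\score(\mb 0,\mb 0)+o(\tau')$, escape probability $\P(\|s(\tau)\|,\|a(\tau)\|>\epsilon)=o(\tau)$), shows the resulting local decomposition coincides with the deterministic one up to $o(\tau)$, and then reruns the deterministic bump construction $\score'=(1-\phi)\score+\phi\,c\,\nabla_a Q$ verbatim, including the delicate bookkeeping of trajectory re-entries into the $\epsilon$-ball (finite and infinite crossing cases). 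You instead derive an \emph{exact infinitesimal} first-variation identity via Feynman--Kac and It\^o's formula, pass to the discounted occupation measure $\rho$, and finish with a KKT-type localization over the norm ball. Each approach buys something: your identity makes the re-entry analysis unnecessary (the occupation measure aggregates all passes through a neighborhood automatically), handles the initial distribution seamlessly, and yields the sharper quantitative conclusion $\alpha_{s,a}=C/\norm{\nabla_a Q^{\score^*}(s,a)}_2$, i.e.\ the optimum saturates the norm bound; the paper's argument is more elementary, never invoking the generator, It\^o's formula, or well-posedness of the stationary equation $\mathcal{L}^\score V-\lambda V+r=0$, and it only needs first-order differentiability of $Q$, whereas you need (or must mollify toward) $C^2$ regularity of $V^{\score^*}$ to apply It\^o --- a genuine issue here since $\Sigma_s,\Sigma_a$ are only assumed positive semidefinite, so the diffusion can be degenerate and the value function need not be smooth; this is the one step you should not wave off as routine. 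The two caveats you flag do resolve favorably: the stochastic theorem's statement omits the bound $\norm{\score}_2\le C$, but the paper's closing appendix note confirms the admissible class is sup-norm-bounded and Lipschitz, so your reading is the intended one; and since the Lipschitz constant is only required to be finite rather than fixed, that constraint is never binding, so perturbations of the form $\eta=(v-\score^*)\phi$ --- which is, notably, exactly the paper's own bump construction --- remain admissible and your ball-maximization argument goes through.
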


Proof can be found in Appendix \ref{sec:proof_stoch_main}. While this theorem does not imply a specific algorithm, it does provide theoretical justification for a class of policy update methods: iteratively matching $\score(s, a)$ to $\nabla_a Q(s, a)$ will provide strict increases to the resulting Q-function globally.

\begin{figure*}[t]
    \centering
    
    % First row
    \hspace{0.8cm}
    \begin{subfigure}{}
        \includegraphics[width=0.22\textwidth]{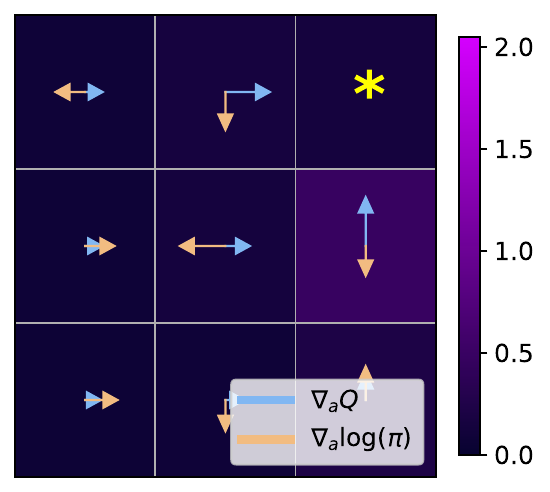}
        % \caption{Randomly initialized $\pi^0$} 
    \end{subfigure}
    \begin{subfigure}{}
        \includegraphics[width=0.22\textwidth]{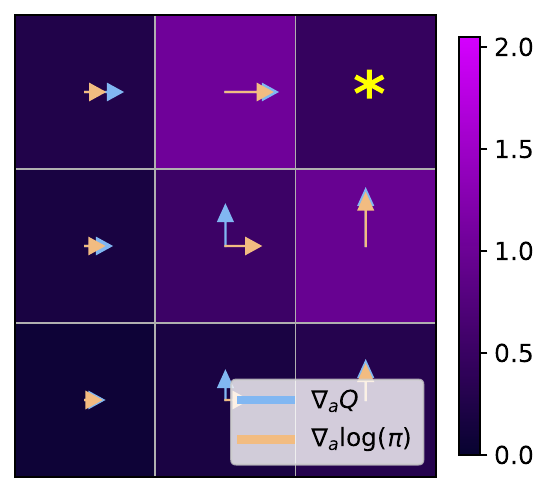}
        % \caption{Once updated $\pi^1$}
    \end{subfigure}
    \begin{subfigure}{}
        \includegraphics[width=0.22\textwidth]{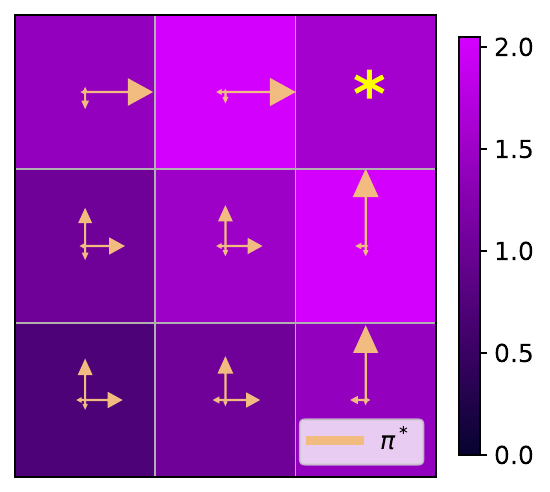}
        % \caption{Converged $\pi^*$}
    \end{subfigure}
    
    \vspace{1em} 
    
    % Second row
    \hspace{-0.1cm}
    \begin{subfigure}{}
        \raisebox{0.22cm}{\includegraphics[width=0.23\textwidth]{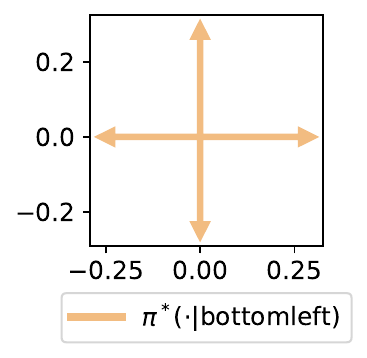}}
        % \caption*{}
    \end{subfigure}
    % \hfill
    \begin{subfigure}{}
        \raisebox{0.44cm}{\includegraphics[width=0.23\textwidth]{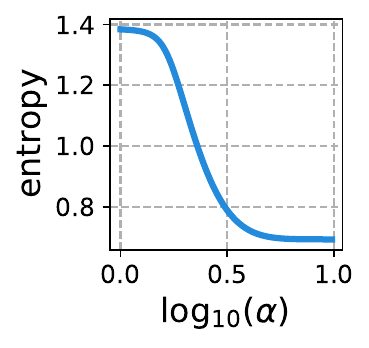}}
        % \caption*{}
    \end{subfigure}
    % \hfill
    \begin{subfigure}{}
        \includegraphics[width=0.23\textwidth]{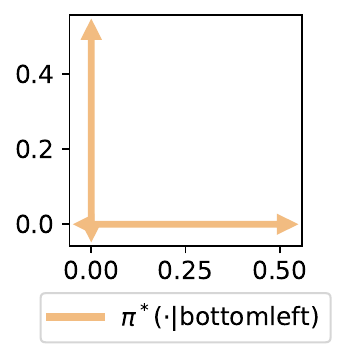}
        % \caption*{}
    \end{subfigure}

    \caption{Pedagogical simulation of our algorithm's reduction to a simple single-goal gridworld setting. \textbf{The top row} is a visualization of two iterates of $\pi(a|s) \leftarrow e^{\alpha Q^\pi(s, a)} / \sum_{a'}e^{\alpha aQ^\pi(s, a')}$, for $\alpha = 2$. The color of each square is the expected reward starting from that square, and we use the local maximizing direction to define discrete gradients: $\nabla_a Q(s, a) \coloneqq a^*Q(s,a^*)$, where $a^* \coloneqq \argmax_{a'} Q(s,a')$, and similarly for $\nabla_a \log(\pi(a|s))$. \textbf{The bottom row} shows the effect of the parameter $\alpha$ on the entropy of the converged distribution $\pi*(s|a)$. To the left is the learned policy with $\alpha = 1$, and to the right the learned policy with $\alpha = 10$.}
    \label{fig:example_gridworld}
\end{figure*}

\subsection{Pedagogical reduction in gridworld}\label{sec:pedagogical}

To provide intuition for Theorem \ref{thm:main_stoch_max}, we illustrate a reduction of the theorem in gridworld, where the state space $\S = \{1, \ldots, M\} \times \{1, \ldots, N\}$ for some fixed $M, N \in \bb N$, and $\A = \{\mathrm{LEFT}, \mathrm{RIGHT}, \mathrm{UP}, \mathrm{DOWN}\}$. While diffusion models are not well-defined in discrete space, there is a sensible reduction to gridworld through the Langevin dynamics link.

Suppose (for Euclidean $\S$ and $\A$) we have completed the matching described in Theorem \ref{thm:main_stoch_max} and found a score $\score^*$ such that $\score^*(s, a) = \alpha \nabla_a Q(s, a)$ for some fixed $\alpha > 0$. Consider the dynamics of \eqref{eq:stochastic_cont_model} for a fixed state ($F(s, a) = 0, \Sigma_s(s, a) = 0$), and setting $\Sigma_a (s, a) = \sqrt{2} I_a$; given certain conditions \citep{bakry2014analysis}, we have the following result for the stationary distribution of actions $a(t)$ as $t\to\infty$ for any fixed $s\in \mc S$, denoted $\pi(a|s)$:
\begin{equation}
    \pi(a|s) \sim e^{\alpha Q(s, a)}.
\end{equation}
Thus, we can view QSM as matching the full action distribution $\pi(a|s)$ the Boltzmann distribution of the Q-function, $\frac{1}{Z}e^{\alpha Q(s, a)}$, where $Z = \int_{\A} e^{\alpha Q(s, a)}$. While directly setting $ \pi(a|s) \sim e^{\alpha Q(s, a)}$ is infeasible in continuous state/action spaces, we can represent the probabilities $\pi(a|s)$ directly as a matrix of shape $|\S| \times |\A|$ in the finite case and use the following update rule:
\begin{equation}\label{eq:soft_policy_iter}
    \pi'(s|a) = \frac{e^{\alpha Q^\pi(s, a)}}{\sum_{\A} e^{\alpha Q^\pi(s, a)}}.
\end{equation}
This update rule is the same as \emph{soft policy iteration} \citep{haarnoja2018soft}, but using the standard Q-function rather than the soft Q-function. Nonetheless, we still see in simulated gridworld environments that $\alpha$ acts as an inverse entropy regularization parameter: the lower $\alpha$ is, the higher the entropy of the converged distribution $\pi^*(a|s)$. We visualize this update rule in a simple gridworld environment, depicted in \cref{fig:example_gridworld}.

% \begin{theorem}[Policy gradient equation]
%     Consider the setting of Theorem \ref{thm:main_stoch_max}, but with a differentiable parameterization $\score(\theta): \R^d \to (\R^s \times \R^a \to \R^a)$ (also denoted $\score_\theta$). The following is the gradient with respect to the parameter $\theta$:
%     \begin{align}
%         \nabla_\theta \mc L(\theta) &= \E_{(s_0, a_0) \sim \bb P \times \pi} \int_{s\times a} \E[\gamma^{t_{(s, a, s_0, a_0)}}]\left(\partial_\theta \score_\theta(s, a)\right)^\top \nabla_a Q(s, a)d(s\times a)
%     \end{align}
%     where $\E[t_{(s, a, s_0, a_0)}]$ is the expected value of the hitting time at $s, a$ from initial state $s_0, a_0$ ($\E[\gamma^{t_{(s, a, s_0, a_0)}}]$ similarly defined), and $\partial_\theta \score_\theta(s, a) \in \R^{a\times d}$ is the Jacobian of the vector field $\score_\theta$ at $(s, a)$ with respect to $\theta$.
% \end{theorem}

% While not apparent at first, the above is indeed reduces to the non-stochastic equation if we take the noise parameter for the action $\epsilon\to 0$.

\begin{figure*}[t]
    \centering
    \begin{subfigure}
        \centering
        \includegraphics[height=0.35\textwidth]{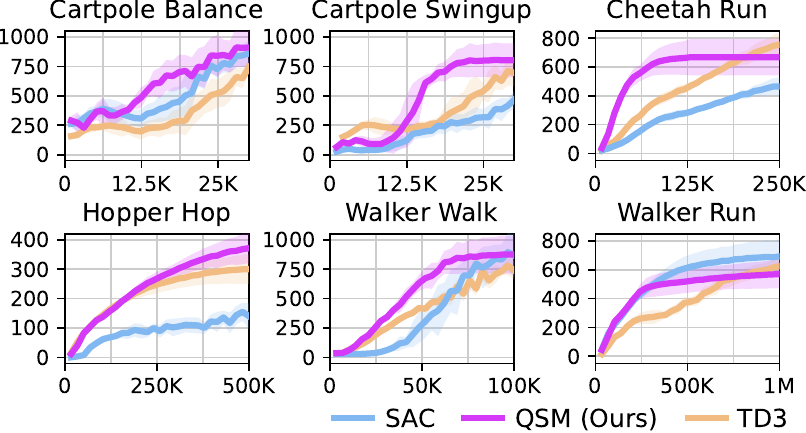}
        \label{fig:first_image}
    \end{subfigure}
    % \hfill % This command adds space between the two figures if needed
    % \vspace{1cm}
    \hspace{-3mm}
    \begin{subfigure}
        \centering
        \raisebox{-4.6mm}[0pt][0pt]{% Adjust the -5mm value as needed to lower the image
\includegraphics[height=0.385\textwidth]{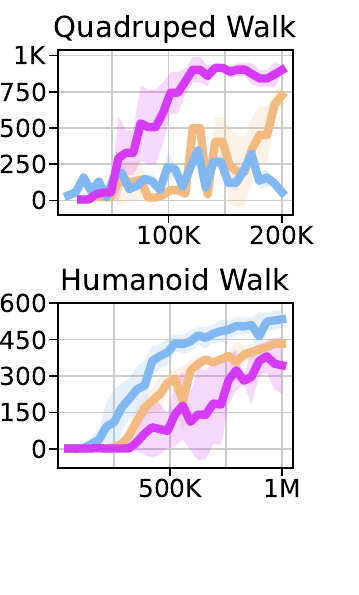}} % Adjust this line with your second image file name
        \label{fig:second_image}
    \end{subfigure}
    % \caption{Experimental results across a suite of eight continuous control tasks. QSM matches and sometimes outperforms TD3 and SAC performance on the tasks evaluated, particularly in samples needed to reach high rewards. We find that policies trained via QSM learn rich action distributions that provide informative action samples, whereas TD3 and SAC are constrained to draw samples from normal and Tanh normal distributions, respectively.}
    \caption{Experimental results across a suite of eight continuous control tasks. QSM matches and sometimes outperforms TD3 and SAC performance on the tasks evaluated, particularly in samples needed to reach high rewards. Even though QSM trains on expressive diffusion models, it matches the sample efficiency of explicit Gaussian and tanh-parameterized models.}
    \label{fig:dmc_subsets}
\end{figure*}
\begingroup
\removelatexerror
\begin{algorithm}[H]
    \label{alg:qsm}
    Initialize critic networks $Q_{\theta_1}$, $Q_{\theta_2}$, and score network $\score_\phi$ with random parameters $\theta_1$, $\theta_2$, $\phi$

    Initialize target networks $\theta^{'}_1 \leftarrow \theta_1$, $\theta^{'}_2 \leftarrow \theta_2$

    Fix $\alpha > 0$

    Initialize replay buffer $\mathcal{B}$

    \While{not converged}{
      % \SetInd{0.5cm}{2cm}  % Adjust these values as needed for your document

      Choose action by iteratively denoising \\
      $a^T \sim \mathcal{N}(0, I) \rightarrow a^0$ using $\score_\phi$:\, $a_t \sim \pi_\phi(x_t)$; \\
      Step environment:\, $x_{t+1}, r_{t+1} \leftarrow \operatorname{env}(a_t)$; \\
      Store $(x_t, a_t, r_{t+1}, x_{t+1})$ in $\mathcal{B}$; \\
      Sample minibatch of $N$ transitions $(x_t, a_t, r_{t+1}, x_{t+1})$ from $\mathcal{B}$; \\
      Sample actions via iterative denoising using $\score_\phi$:\, $\tilde{a}_{t+1} \sim \pi_\phi(x_{t+1})$; \\
      Compute critic target:\\ $y_t = r_{t+1} + \gamma \min_{i = 1, 2} Q_{\theta^{'}_i} (x_{t+1}, \tilde{a}_{t+1})$; \\
      Update critics:\\ $\theta_i = \argmin_{\theta_i} N^{-1} \sum (y_t - Q_{\theta_i} (s_{t}, a_t))^2 $; \\
      Update score model:\\ $\phi = \argmin_\phi N^{-1} \sum \|\score_\phi(x_t, a_t) - \alpha \nabla_a Q(x_t, a_t)\|^2$;\\
      Update target networks:\, $\theta^{'}_i \leftarrow \tau \theta_i + (1 - \tau) \theta^{'}_i$;
    }
    \caption{Q-Score Matching (QSM)}
\end{algorithm} 
\endgroup

\section{Experiments}\label{sec:experiments}

In this section, we describe a practical implementation of QSM and evaluate our algorithm on various environments from the Deepmind control suite \citep{tunyasuvunakool2020dmcontrol}. We seek to answer the following questions:

\begin{enumerate}
    \item Can QSM learn meaningful policies provided limited interaction with the environment?
    \item Does QSM learn complex, multi-modal policies?
    \item How does QSM compare to popular baselines?
\end{enumerate}

\begin{figure}
    \centering
    \includegraphics[width=0.22\textwidth]{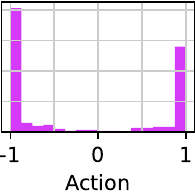}
    \caption{QSM can learn multi-modal policies. Samples from policy shown for the first state of a toy cartpole swingup task, where -1 and 1 represent the initial action for each of two optimal trajectories.}
    \label{fig:policy}
\end{figure}
In particular, the first point allows us to verify that using the score-based structure of diffusion models allows us to train diffusion model policies in a sample-efficient manner. We implement QSM using N-step returns \citep{Sutton1998} and the Q-function optimization routine outlined in DDPG \citep{lillicrap2019continuous}. For each update step, actions from the replay buffer are noised according to a variance-preserving Markov chain, similar to \cite{hansen2023idql}. We then evaluate $\nabla_a Q$ and provide this as the target to our score model $\score$. This update is computed for every environment step with batches of size 256 sampled from the replay buffer. Both the critic and score model are parameterized as 2-layer MLPs with 256 units per layer. A pseudocode sketch for the algorithm is provided in Algorithm \ref{alg:qsm}, summarized here for readability; for full algorithm details, please refer to our publicly available implementation \footnote{\url{https://github.com/Alescontrela/score_matching_rl}}. In this work, we add a small amount of Gaussian noise to the final sampled actions from the policy, as well as the sampled minibatch actions for training. We note that other, interesting exploration strategies exist, such as limiting the amount of denoising steps applied to the action. However, in this work we focus mainly on QSM, and leave additional study of exploration to future work.

\subsection{Continuous Control Evaluation}

\begin{figure*}[t]
    \centering
    \begin{subfigure} % Adjust the width as necessary
        \centering
        \raisebox{13.2mm}[0pt][0pt]{\includegraphics[height=0.235\textwidth]{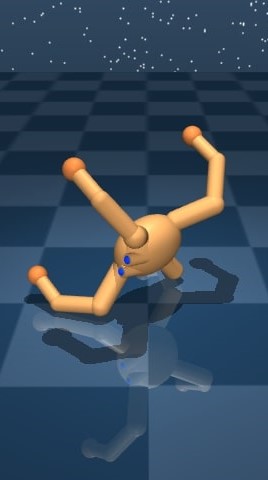}}
        % \caption{Caption for the new image}
        \label{fig:new_image}
    \end{subfigure}
    % \hfill % This adds optional space between the two figures
    \hspace{0.5cm}
    \begin{subfigure} % Adjust the width as necessary
        \centering
        \includegraphics[height=0.385\textwidth]{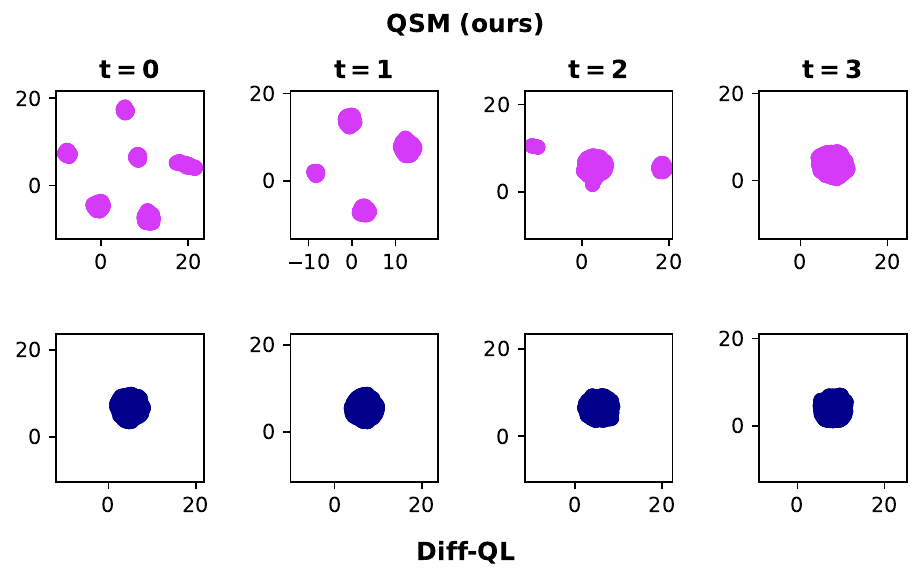}
        % \caption{Experimental results across a suite of eight continuous control tasks.}
        \label{fig:quad_multimodal}
    \end{subfigure}
    \caption{Demonstration of multimodal action distributions due to QSM. Displayed are sampled actions from a successfully QSM-trained model and a successfully Diffusion-QL-trained model. Each figure displays 1,000 sampled actions at the given time step from the displayed initial condition of quadruped\_walk, projected down to $\R^2$ using UMAP \citep{mcinnes2018umap}. Both compared models are exactly the same (including sampling procedure), except for the method the denoising submodel was trained. This demonstrates that the diversity in sampling from QSM comes not from the diffusion model architecture, but the training methodology itself.}
    \label{fig:multimodal_comparison}
\end{figure*}

In \cref{fig:dmc_subsets}, we evaluate QSM on six tasks from the Deepmind control suite of tasks. These tasks range from easy to medium difficulty. We also compare QSM to SAC \citep{haarnoja2018soft} and TD3 \citep{fujimoto2018addressing}, and find that QSM manages to achieve similar or better performance than the SAC and TD3 baselines. Conventional actor-critic methods parameterize the actor as a Gaussian distribution or a transformed Gaussian distribution, which can lead to sub-optimal exploration strategies, as certain action modes may get dropped or low Q-value actions may get sampled frequently. QSM  instead learns policies which are not constrained to a fixed distribution class and are able to cover many different action modes. As such, QSM learn policies which are multi-modal and complex, thereby enabling policies to better approximate the true optimal policy. In \cref{fig:policy}, we visualize actions sampled from a QSM policy for the initial state of a toy cartpole swingup task, and in Figure \ref{fig:multimodal_comparison}, we visualize actions on a simulated quadruped environment, compared to an alternative method of training diffusion model policies. Note the high mass around the extreme actions, all of which represent the initial action for the optimal trajectories, which then converge once the policy has committed to one of these trajectories.

\begin{figure}
    \centering
    \includegraphics[width=0.41\textwidth]{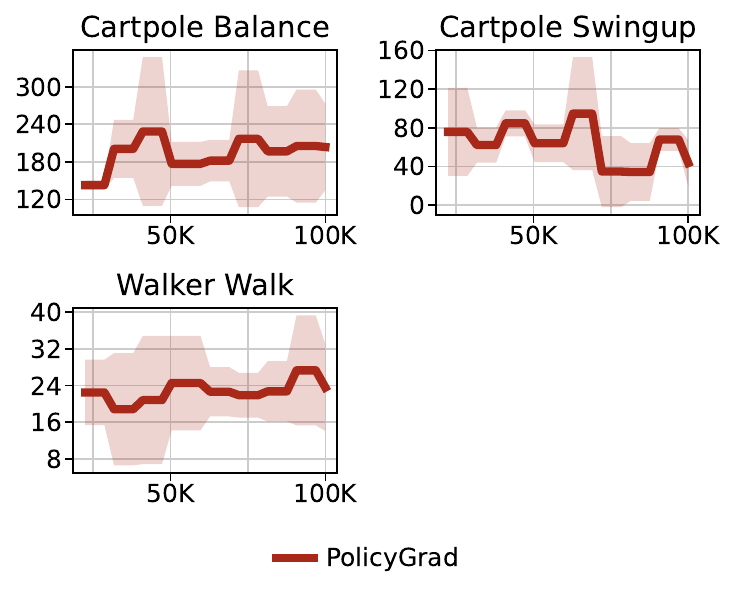}
    \caption{Performance when updating a diffusion model policy via policy gradients on select continuous control tasks, which is notably worse than QSM (ours), matching discussion in Section \ref{sec:policygrad_intro}. Results on further environments can be found in Figure \ref{fig:appendix_policygrad}.}
    \label{fig:policygrad_comparison}
\end{figure}

\section{Conclusion}
Diffusion models offer a promising class of models to represent policy distributions, not only because of their expressibility and ease of sampling, but the ability to model the distribution of a policy through its score. To train such diffusion model policies in the reinforcement learning setting, we introduce the Q-score matching (QSM) algorithm, which iteratively matches the parameterized score of the policy to the action gradient of its Q-function. This gives a more geometric viewpoint on how to optimize such policies, namely through iteratively matching vector fields to each other. We additionally provide a practical implementation of this algorithm, and find favorable results when compared to popular RL algorithms.

There are still plenty of avenues for future work. QSM itself is one implementation of the broader idea of matching the action distribution's learned score $\score$, or the ``drift'' term of the diffusion model, to the ``Q-score'' $\nabla_a Q$; nonetheless, there are many alternative ways to iteratively match the score model to $\nabla_a Q$ that still benefit from \Cref{thm:main_stoch_max} and the idea of iteratively matching $\score$ to $\nabla_a Q$. Furthermore, this work focused purely on the score $\score$ and assumed a fixed noise model $\Sigma_a$. However, optimization of $\Sigma_a$ is an especially interesting direction for future work; delving deeper into this optimization might reveal intriguing connections to maximum entropy reinforcement learning, providing a richer understanding of the balance between exploration and exploitation. Additionally the theory for this paper builds a baseline for QSM and related algorithms, but there still remain many important theoretical questions to answer. In particular, convergence rate analysis could highlight further improvements to the QSM algorithm.

\begin{figure*}[t]
    \centering
    \includegraphics[width=0.89\textwidth]{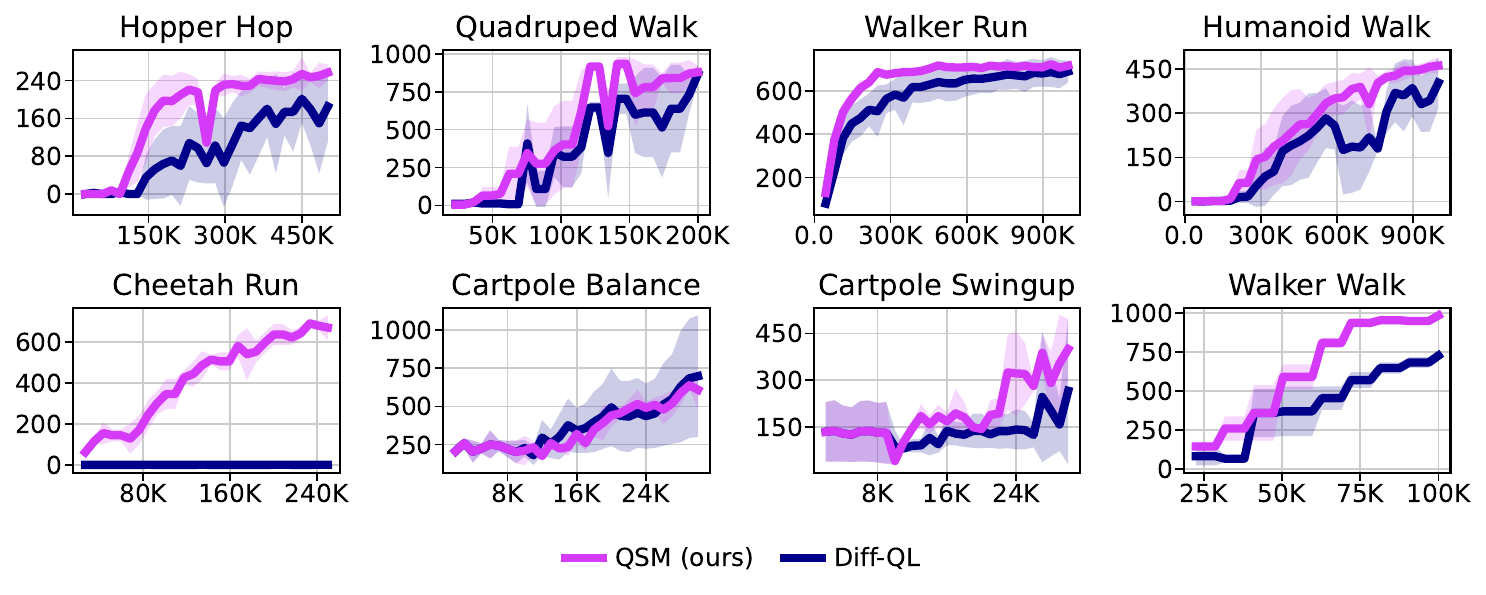}
    \caption{Comparison between QSM and Diffusion-QL \citep{wang2022diffusion} over the same shared hyperparameters (the QSM-unique parameter $\alpha$ as seen in Algorithm \ref{alg:qsm} is increased for harder tasks) -- QSM performance here differs from Figure \ref{fig:dmc_subsets} due to the enforcement of consistent shared hyperparameters, in order to emphasize distinctions between QSM and Diffusion-QL. While performance is mostly comparable, note that QSM takes up significantly less computation time for the policy gradient, as it only differentiates through the denoising model rather than the full diffusion evaluation, and generally yields more consistent results, while Diffusion-QL can sometimes get ``stuck'' at suboptimal solutions. This idea is further supported through Figure \ref{fig:multimodal_comparison}, which demonstrates QSM's distinct promotion of exploration.}
    \label{fig:qsm_diffQ}
\end{figure*}

\section*{Acknowledgements}
Michael Psenka acknowledges support from ONR grant N00014-22-1-2102. Alejandro Escontrela acknowledges support from an NSF Fellowship, NSF NRI \#2024675. Yi Ma acknowledges support from ONR grant N00014-22-1-2102 and the joint Simons Foundation-NSF DMS grant \#2031899. This work was partially supported by NSF 1704458, the Northrop Grumman Mission Systems Research in Applications for Learning Machines (REALM) initiative, NIH NIA 1R01AG067396, and ARO MURI W911NF-17-1-0304.

\section*{Impact statement}
This paper presents work whose goal is to advance the field of Machine Learning. There are many potential societal consequences of our work, none which we feel must be specifically highlighted here.

\bibliography{main}
\bibliographystyle{icml2024}

%%%%%%%%%%%%%%%%%%%%%%%%%%%%%%%%%%%%%%%%%%%%%%%%%%%%%%%%%%%%%%%%%%%%%%%%%%%%%%%
%%%%%%%%%%%%%%%%%%%%%%%%%%%%%%%%%%%%%%%%%%%%%%%%%%%%%%%%%%%%%%%%%%%%%%%%%%%%%%%
% APPENDIX
%%%%%%%%%%%%%%%%%%%%%%%%%%%%%%%%%%%%%%%%%%%%%%%%%%%%%%%%%%%%%%%%%%%%%%%%%%%%%%%
%%%%%%%%%%%%%%%%%%%%%%%%%%%%%%%%%%%%%%%%%%%%%%%%%%%%%%%%%%%%%%%%%%%%%%%%%%%%%%%
\newpage
\appendix
\onecolumn
% \section{Additional experiments}

% We provide here experiments of QSM on more difficult environments (here Ant/Quadruped and Humanoid, both walk tasks). To push the performance of QSM further, we only added one hyperparameter described in Section \ref{sec:pedagogical}: a scaling parameter for the Q action gradient $\nabla_a Q$ being matched to: that is, the training loss for $\score$ is $\|\score - \alpha \nabla_a Q\|_2^2$ for some fixed scalar $\alpha > 0$.

% \begin{figure}
%     \centering
    
%     % First row
%     \begin{subfigure}{}
%         \centering
%         \includegraphics[width=0.3\linewidth]{}
%         % \caption{Quadruped (ant) walk} 
%     \end{subfigure}
%     \begin{subfigure}{}
%         \centering
%         \includegraphics[width=0.3\linewidth]{}
%         % \caption{Humanoid walk}
%     \end{subfigure}
    
%     \caption{Evaluation returns of QSM on Quadruped Walk and Humanoid Walk respectively, demonstrating potential of similar methods that exploit diffusion model structure. As tasks become more dependent on long-term behavior, the scaling term $\alpha$ needs to be increased further to allow more aggressive exploration further from rewards.}
% \end{figure}
\section{Additional experiments}
We provide here additional experiments that give further context for our method and algorithm. Some results, like Figure \ref{fig:new_depthcomparison}, demonstrate a future potential for QSM as we apply this method to more complex tasks that require deeper diffusion policies, and some results like Figure \ref{fig:appendix_policygrad} give further context for the results presented in the main body.

\begin{figure}
    \centering
    \includegraphics[width=0.69\textwidth]{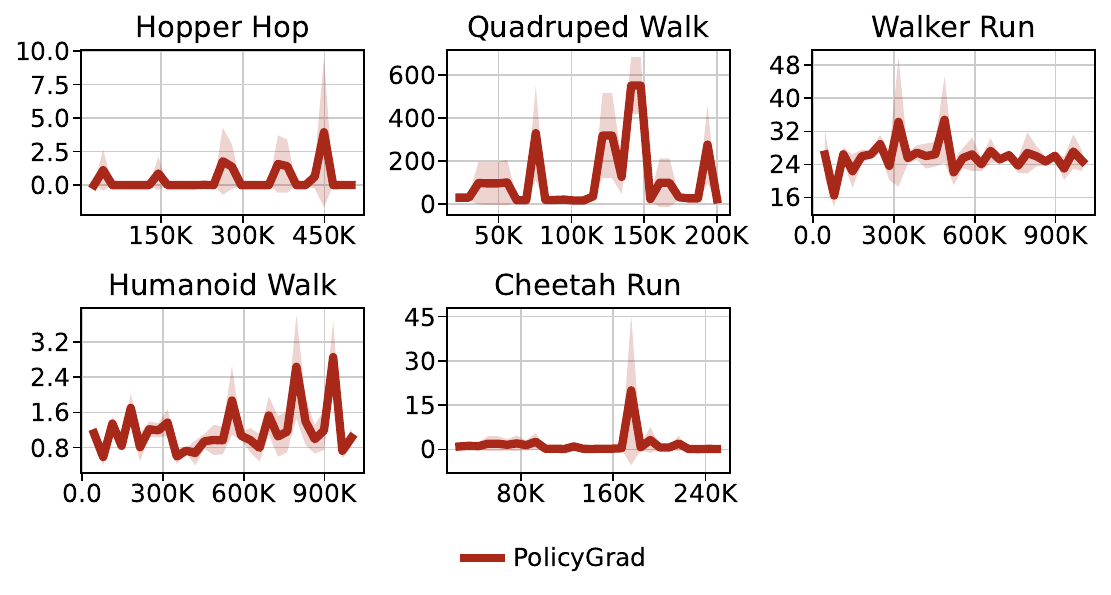}
    \caption{Results on a policy gradient update for the diffusion model policy on the remaining tested environments. As further confirmed, policy gradients fails to learn optimal policies for diffusion models in the off-policy setting. Note that some initializations of \texttt{quadruped\_walk} have an incredibly wide space of policies that yield high rewards; e.g. when dropped completely right-side up.}
    \label{fig:appendix_policygrad}
\end{figure}

\begin{figure}
    \centering
    \includegraphics[width=0.49\textwidth]{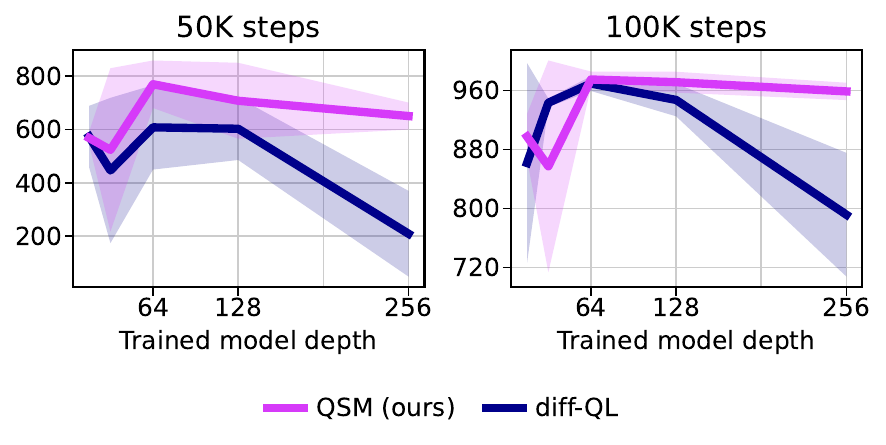}
    \caption{Comparison of QSM and Diffusion-QL \citep{wang2022diffusion} when increasing number of diffusion sampling steps, for the \texttt{walker\_walk} environment. Plotted are the validation returns at 50K and 100K training steps respectively. Since QSM uses diffusion model structure and trains the noise model directly, it scales with diffusion depth, while Diffusion-QL becomes unstable.}
    \label{fig:new_depthcomparison}
\end{figure}

\section{Proofs}

The following proofs build upon each other; for readability, please read all proofs in sequence. Please see Section \ref{sec:problem_formulation} for a notation introduction.

\textbf{A note for readability.} By far the longest and densest segment is the first part of the proof for Theorem \ref{thm:main_nonstoch}, namely the section ``aligning at the origin''. This is where the majority of our analysis of the Q-function is built, and the rest of the theory will pretty straightforwardly follow from what is built up there. If you understand this portion, the rest of this paper's theory will follow.

\subsection{Proof of policy gradient formula in \eqref{eq:policy_gradient_main}} \label{sec:proof_policygrad}
We introduce here a formal theorem and proof pair to justify \eqref{eq:policy_gradient_main}.
\begin{theorem}\label{thm:policy_grad_expand}
    Suppose we parameterize a policy $\pi_\theta(a|s)$ through an incremental sequence of steps, such that we only have access to marginal probabilities $\pi_\theta(a^\tau | a^{\tau - 1}, s)$ that satisfy the following identity:
    \begin{equation}
        \pi(a^K|s) = \int_{a^{K-1}} \cdots \int_{a^1} \pi(a^1)\prod_{\tau = 2}^K\pi(a^\tau | a^{\tau - 1}, s) da^1 \cdots da^{K-1}.
    \end{equation}
    The following expectations for the policy gradient are then equal:
    \begin{align} 
    &\E_{(s_0, a_0, s_1, \ldots)}\sum_{t=1}^\infty Q(s_t, a_t)\nabla_\theta \log \pi_\theta(a_t | s_t)\label{eq:policy_grad_formula_appendix} \\
    &\quad = \E_{(s_0, \{a_0^\tau\}_{\tau=1}^K, s_1, \ldots)}\sum_{t=1}^\infty Q(s_t, a_t^K) \left( \sum_{\tau=1}^K\nabla_\theta\log \pi_\theta(a_t^\tau | a_t^{\tau - 1}, s_t)\right),\nonumber
    \end{align}
    where we denote $\pi_\theta(a_t^1 | a_t^0,s) = \pi_\theta(a_t^1 | s)$.
\end{theorem}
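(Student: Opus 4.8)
The plan is to reduce the claimed equality to a single per-timestep identity and then assemble the full statement by linearity of expectation over $t$. Concretely, I would first isolate the heart of the matter in the following lemma: for a fixed state $s$ and any integrable function $f : \mc A \to \R$ (treated as not depending on $\theta$),
\[
\int \Big(\prod_{\tau=1}^K \pi_\theta(a^\tau \mid a^{\tau-1}, s)\Big) f(a^K) \sum_{\tau=1}^K \nabla_\theta \log \pi_\theta(a^\tau \mid a^{\tau-1}, s)\, da^{1} \cdots da^K = \int \pi_\theta(a^K \mid s)\, f(a^K)\, \nabla_\theta \log \pi_\theta(a^K \mid s)\, da^K,
\]
with the convention $\pi_\theta(a^1 \mid a^0, s) = \pi_\theta(a^1 \mid s)$. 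This lemma is exactly the claim restricted to a single timestep with $f(\cdot) = Q(s_t, \cdot)$, the key point being that $Q(s_t, a_t^K)$ depends on the internal diffusion chain only through its final iterate $a_t^K$.

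To prove the lemma, I would rewrite each score factor as $\nabla_\theta \log \pi_\theta(a^\tau \mid a^{\tau-1}, s) = \nabla_\theta \pi_\theta(a^\tau \mid a^{\tau-1}, s)/\pi_\theta(a^\tau \mid a^{\tau-1}, s)$ and observe, via the product rule, that
\[
\Big(\prod_{\tau=1}^K \pi_\theta(a^\tau \mid a^{\tau-1}, s)\Big)\sum_{\tau=1}^K \nabla_\theta \log \pi_\theta(a^\tau \mid a^{\tau-1}, s) = \nabla_\theta \prod_{\tau=1}^K \pi_\theta(a^\tau \mid a^{\tau-1}, s).
\]
Substituting this into the left-hand integral collapses the weighted score sum into a single gradient of the joint density. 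Interchanging $\nabla_\theta$ with the integrals over the internal variables $a^1, \ldots, a^{K-1}$ and invoking the marginalization identity from the hypothesis,
\[
\int \prod_{\tau=1}^K \pi_\theta(a^\tau \mid a^{\tau-1}, s)\, da^1 \cdots da^{K-1} = \pi_\theta(a^K \mid s),
\]
leaves $\int f(a^K)\, \nabla_\theta \pi_\theta(a^K \mid s)\, da^K$, which I rewrite back as the right-hand side using $\nabla_\theta \pi_\theta = \pi_\theta\, \nabla_\theta \log \pi_\theta$.

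Finally, I would lift the lemma to the full trajectory. Fixing $t$, the summand $Q(s_t, a_t^K)\sum_{\tau=1}^K \nabla_\theta \log \pi_\theta(a_t^\tau \mid a_t^{\tau-1}, s_t)$ depends on the augmented trajectory only through $s_t$ and the internal actions $\{a_t^\tau\}_{\tau=1}^K$; conditioning on $s_t$, these internal actions are distributed exactly as the product chain, since the action-generation process at time $t$ is Markov given $s_t$. Applying the lemma with $f(\cdot) = Q(s_t, \cdot)$ inside the conditional expectation and taking the outer expectation over $s_t$ shows that the $t$-th term on the right equals $\E[Q(s_t, a_t)\nabla_\theta \log \pi_\theta(a_t \mid s_t)]$, where $a_t = a_t^K \sim \pi_\theta(\cdot \mid s_t)$ under the marginalized dynamics; summing over $t$ by linearity then yields the claim. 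The obstacles I anticipate are analytic rather than structural: justifying the interchange of $\nabla_\theta$ with the integral over the internal actions (standard differentiation-under-the-integral regularity on $\pi_\theta$, e.g. via dominated convergence), and justifying the interchange of the sum over $t$ with the expectation, which follows from the integrability implicit in the convergence of the discounted policy-gradient series given $r \in [0,1]$ and $\gamma \in (0,1)$.
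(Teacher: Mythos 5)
Your proposal is correct and follows essentially the same route as the paper's proof: both hinge on the log-derivative trick $\nabla_\theta \pi = \pi\, \nabla_\theta \log \pi$, the product-rule identity collapsing $\bigl(\prod_\tau \pi_\theta\bigr)\sum_\tau \nabla_\theta \log \pi_\theta$ into $\nabla_\theta \prod_\tau \pi_\theta$, interchange of $\nabla_\theta$ with the integrals over internal actions, and the marginalization hypothesis. The only differences are presentational — you run the chain of equalities in the reverse direction and package the inner computation as a per-timestep lemma lifted by conditioning on $s_t$, whereas the paper performs the same manipulation inline under the trajectory expectation.
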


\begin{proof}
    We directly plug in \eqref{eq:action_expanded} in the simplified terms to the LHS to get the following:

    \begin{equation*}
        \E_{(s_0, a_0^K, s_1, \ldots)}\sum_{t=1}^\infty Q(s_t, a_t)\nabla_\theta \log \left( \int_{a_t^{K-1}} \cdots \int_{a_t^1} \prod_{\tau = 1}^K\pi(a_t^\tau | a_t^{\tau - 1}, s_t)da^1 \cdots da^{K-1}\right).
    \end{equation*}

    While logs of sums/integrals do not expand, we can utilize the fact that this is the gradient of a log to re-expand and simplify to the following:

    \begin{align}
        (LHS) &= \E_{(s_0, a_0^K, s_1, \ldots)}\sum_{t=1}^\infty Q(s_t, a_t^K)\frac{1}{ \int_{a_t^{K-1}} \cdots \int_{a_t^1} \prod_{\tau = 1}^K\pi(a_t^\tau | a_t^{\tau - 1}, s_t)} \\
        &\quad \cdot \nabla_\theta \int_{a_t^{K-1}} \cdots \int_{a_t^1} \prod_{\tau = 1}^K\pi(a_t^\tau | a_t^{\tau - 1}, s_t)da^1 \cdots da^{K-1},\nonumber\\
        &= \E_{(s_0, a_0^K, s_1, \ldots)}\sum_{t=1}^\infty Q(s_t, a_t^K)\frac{1}{ \pi_\theta(a_t^K | s_t)}  \int_{a_t^{K-1}} \cdots \int_{a_t^1} \nabla_\theta \prod_{\tau = 1}^K\pi(a_t^\tau | a_t^{\tau - 1}, s_t)da^1 \cdots da^{K-1},\\
        &= \E_{(s_0, a_0^K, s_1, \ldots)}\sum_{t=1}^\infty \int_{a_t^{K-1}} \cdots \int_{a_t^1} Q(s_t, a_t^K)\frac{1}{ \pi_\theta(a_t | s_t)}  \left(\prod_{\tau = 1}^K\pi(a_t^\tau | a_t^{\tau - 1}, s_t)\right)\\
        &\quad \cdot \nabla_\theta \log (\prod_{\tau = 1}^K\pi(a_t^\tau | a_t^{\tau - 1}, s_t))da^1 \cdots da^{K-1},\nonumber\\
        &= \E_{(s_0, s_1, s_2, \ldots)}\sum_{t=1}^\infty \int_{a_t^K}\int_{a_t^{K-1}} \cdots \int_{a_t^1} \left(\prod_{\tau = 1}^K\pi(a_t^\tau | a_t^{\tau - 1}, s_t)\right) \\
        &\quad \cdot Q(s_t, a_t^K) \nabla_\theta \sum_{\tau = 1}^K\log (\pi(a_t^\tau | a_t^{\tau - 1}, s_t))da^1 \cdots da^{K},\nonumber\\
        &= \E_{(s_0, \{a_0^\tau\}_{\tau=1}^K, s_1, \ldots)}\sum_{t=1}^\infty Q(s_t, a_t^K) \left(\sum_{\tau = 1}^K\nabla_\theta\log (\pi(a_t^\tau | a_t^{\tau - 1}, s_t))\right).
    \end{align}
\end{proof}

\subsection{Proof of Theorem \ref{thm:main_nonstoch}}\label{sec:proof_nonstochopt}
\begin{proof}
    We will use the shorthand $Q \coloneqq Q^\score$. Note that for any $\tau > 0$, we can decompose $Q(\mb 0, \mb 0)$ recursively into the following:
    \begin{align}
        Q(\mb 0, \mb 0) &= \int_0^\infty \gamma^t r(s(t)) dt,\\
        &= \int_0^\tau \gamma^t r(s(t)) dt + \int_\tau^\infty \gamma^t r(s(t)) dt,\\
        &= \int_0^\tau \gamma^t r(s(t)) dt + \int_0^\infty \gamma^{t+\tau} r(s(t+\tau)) dt,\\
        &= \int_0^\tau \gamma^t r(s(t)) dt + \gamma^\tau \int_0^\infty \gamma^t r(s(t, s(\tau), a(\tau)) dt,\\
        &= \int_0^\tau \gamma^t r(s(t)) dt + \gamma^\tau Q(s(\tau), a(\tau)).\label{eq:proof_Q_recursive}
    \end{align}
    This will form the baseline of all of our local analysis, namely the fact that the Q-function can be broken down into two components: 

    \begin{enumerate}
        \item a purely state-dependent integral that can be made arbitrarily small, and
        \item a boundary term of the same Q-function.
    \end{enumerate}

    We proceed with this proof with the contrapositive statement: suppose that, for some $t \in [0,\infty)$, it follows that $\score(s(t), a(t))$ is not aligned with $\nabla_a Q(s(t), a(t))$. We show that there exists a ``bump'' of $\score$ towards $\nabla_a Q$ that strictly increases $Q(\mb 0, \mb 0)$.

    \textbf{Aligning at the origin.} Suppose the above contrapositive statement yields $t=0$, and assume that $\nabla_a Q(\mb 0, \mb 0) \ne \mb 0$, and that $\score(\mb 0, \mb 0)$ is not collinear with $\nabla_a Q(\mb 0, \mb 0)$. We will use this to construct a modified vector field $\score'$ such that $Q^{\score'}(\mb 0, \mb 0) > Q^{\score}(\mb 0, \mb 0)$. Suppose we have chosen $\tau$ and $\epsilon$ small enough such that the following all hold:
    \begin{enumerate}
        \item $s(\tau') = \tau' F(\mb 0, \mb 0) + o(\tau')$ for all $0 < \tau' \le \tau$,
        \item $a(\tau') = \tau' \score(\mb 0, \mb 0) + o(\tau')$ for all $0 < \tau' \le \tau$,
        \item $r(s) = r(\mb 0) + \inner{\nabla_s r(\mb 0)}{s} + o(\|s\|)$ for all $s, \|s\| \le \epsilon$,
        \item $Q(s, a) = Q(\mb 0, \mb 0) + \inner{\nabla_s Q(\mb 0, \mb 0)}{s} + \inner{\nabla_a Q(\mb 0, \mb 0)}{a} + o(\|s \colon a\|)$ for all $s, a$ such that $\|s\|,  \|a\| \le \epsilon$, where $(s \colon a) \in \R^{s\times a}$ is the stacked vector of $s$ and $a$,
        \item $\|s(\tau')\| \le \epsilon, \|a(\tau')\| \le \epsilon$ for all $0 < \tau' \le \tau$.
    \end{enumerate}
    We can then make the following approximations for $Q(\mb 0, \mb 0)$:
    \begin{align}
        Q(\mb 0, \mb 0) &= \int_0^\tau \gamma^t r(s(t)) dt + \gamma^\tau Q(s(\tau), a(\tau)),\\\
        &= \int_0^\tau \gamma^t \left(r(\mb 0) + \inner{\nabla_s r(\mb 0)}{s(t)} + o(\|s(t)\|)\right)dt\label{eq:proof_prepath_linearity} \\
        &\quad + \gamma^\tau \left ( Q(\mb 0, \mb 0) + \inner{\nabla_s Q(\mb 0, \mb 0)}{s(\tau)} + \inner{\nabla_a Q(\mb 0, \mb 0)}{a(\tau)} + o(\|s(\tau) \colon a(\tau)\|)\right),\nonumber\\
        &= \int_0^\tau \gamma^t \left(r(\mb 0) + t\inner{\nabla_s r(\mb 0)}{F(\mb 0, \mb 0)} + o(t)(\|F(\mb 0, \mb 0)\| + \|\nabla_s r(\mb 0)\|)\right)dt \\
        &\quad + \gamma^\tau  ( Q(\mb 0, \mb 0) + \tau\inner{\nabla_s Q(\mb 0, \mb 0)}{F(\mb 0, \mb 0)} + \tau \inner{\nabla_a Q(\mb 0, \mb 0)}{\score(\mb 0, \mb 0)} \nonumber\\
        &\quad + o(\tau)(\|\nabla_s Q(\mb 0, \mb 0)\| + \|\nabla_a Q(\mb 0, \mb 0)\| + \|F(\mb 0, \mb 0) \colon \score(\mb 0, \mb 0)\|)),\nonumber\\
        &= \int_0^\tau \gamma^t \left(r(\mb 0) + t\inner{\nabla_s r(\mb 0)}{F(\mb 0, \mb 0)}\right)dt \label{eq:proofs_local_approxQ}\\
        &\quad + \gamma^\tau  ( Q(\mb 0, \mb 0) + \tau\inner{\nabla_s Q(\mb 0, \mb 0)}{F(\mb 0, \mb 0)} + \tau \inner{\nabla_a Q(\mb 0, \mb 0)}{\score(\mb 0, \mb 0)}) \nonumber\\
        &\quad + o(\tau).\nonumber
    \end{align}
    Note the three summands in \eqref{eq:proofs_local_approxQ} that $Q(\mb 0, \mb 0)$ now splits into:
    \begin{enumerate}
        \item An integral term purely dependent on the origin state dynamics $F(\mb 0, \mb 0)$,
        \item A term with linearly separable dependence on the origin's state dynamics $\score(\mb 0,  \mb 0)$, and
        \item A term decaying strictly faster than our discretization term $\tau$.
    \end{enumerate}
    We then focus our attention on the second summand, as it at least appears to contain the separated dependence of the score $\score$. Note, however, that the Q-function itself still has dependence on $\score$, so we need to be careful. To finalize the separation of the score, we split into two main cases:
    \paragraph{1: No re-entry.}
    \label{enum:part} Suppose there exists some $\epsilon > 0$ such that $a(t)$ will not re-cross the $\epsilon$-ball around the origin, denoted $B_{\epsilon}$, after its first escape, from any initial condition within $B_\epsilon$. Thus, for this further reduced neighborhood $\epsilon$ (and likewise reduced $\tau$), $Q(\mb 0, \mb 0)$ is completely determined by the local integral $\int_0^\tau \gamma^t r(s(t)) dt$ while $(s(t), a(t))$ traverses through $B_{\epsilon}$, and the boundary condition $\gamma^\tau Q(s(\tau), a(\tau))$ once it hits the $\epsilon$-ball boundary. This gives us a neighborhood to manipulate $\score$: if the changes to $\score$ only happen within $B_\epsilon$, then the value of $Q(s, a)$ on the border of $B_{\epsilon}$ remains the same.

        Let $\phi : \R^a \to \R$ be a partition of unity with respect to $B_\epsilon$: $\phi$ is a function purely of the norm $\|a\|$, $\phi(a) = 1$ for all $\|a\| \le \frac{\epsilon}{2}$, and $\phi(a) = 0$ for all $\|a\| \ge \epsilon$. Consider the following modified score function:
        \begin{equation}\label{eq:proof_construction_scorenew}
            \score'(s, a) \coloneqq (1-\phi(a))\score(s,a) + \phi(a)\frac{\|\score(\mb 0, \mb 0)\|}{\|\nabla_a Q(\mb 0, \mb 0)\|} \nabla_a Q(\mb 0, \mb 0).
        \end{equation}
        By construction, $\|\score\|_{\mc C^0} \coloneqq \sup_{s, a} \|\score(s, a)\| \le C \implies \|\score'\|_{\mc C^0} \le C$, and $\score'$ is Lipschitz (of a likely higher but finite constant) as it is the sum and product of Lipschitz functions. Note that if $\score(\mb 0, \mb 0) = \mb 0$, then $\score' = (1-\phi(a))\score$, in which case we replace $\|\score(\mb 0, \mb 0)\|$ in \eqref{eq:proof_construction_scorenew} with an appropriate constant such that $\|\score'\|_{\mc C^0} \le C$.

        We then get the following improvement of $Q(\mb 0,\mb 0)$, working around the fact that the Taylor approximation bounds with respect to $\epsilon, \tau$ do not necessarily apply now for the action component of $\score'$:
        \begin{align}
            Q^{\score'}(\mb 0, \mb 0) - Q^{\score}(\mb 0, \mb 0) &= \gamma^\tau  \inner{\nabla_a Q(\mb 0, \mb 0)}{a_{\score'}(\tau) - a_\score(\tau)} + o(\tau), \\
            &= \gamma^\tau  \inner{\nabla_a Q(\mb 0, \mb 0)}{\int_0^\tau \score'(\mb 0, a(\tau))dt - \tau \score(\mb 0, \mb 0)} \nonumber\\
            &\quad + o(\tau), \\
            &= \gamma^\tau  \inner{\nabla_a Q(\mb 0, \mb 0)}{\int_0^\tau \score'(\mb 0, a(t))dt - \int_0^\tau \score(\mb 0, \mb 0)dt} \nonumber\\
            &\quad + o(\tau), \\
            &= \gamma^\tau  \int_0^\tau\inner{\nabla_a Q(\mb 0, \mb 0)}{\score'(\mb 0, a(t)) - \score(\mb 0, \mb 0)}dt \nonumber\\
            &\quad + o(\tau), \\
            &= \gamma^\tau \ell(\tau) + o(\tau),
        \end{align}
        where $\ell(\tau) \coloneqq \int_0^\tau\inner{\nabla_a Q(\mb 0, \mb 0)}{\score'(\mb 0, a(\tau)) - \score(\mb 0, \mb 0)}dt$. Note that the desired inequality follows if $\ell(\tau) \ge M\tau$ for some $M > 0$, since this would imply $\lim_{\tau\to 0}\frac{o(\tau)}{\gamma^\tau \ell(\tau)} = 0$ and further imply there exists some $\tau$ such that $|\gamma^\tau \ell(\tau)| > |o(\tau)|$, and thus further that $Q^{\score'}(\mb 0, \mb 0) > Q^{\score}(\mb 0, \mb 0)$.

        It then remains to find a constant $M$ such that $\ell(\tau) \ge M\tau$. Split the integral at hand as follows:
        \begin{align}
            &\int_0^\tau\inner{\nabla_a Q(\mb 0, \mb 0)}{\score'(\mb 0, a(\tau)) - \score(\mb 0, \mb 0)}dt\\
            &\quad =\int_0^{\tau_{\epsilon / 2}}\inner{\nabla_a Q(\mb 0, \mb 0)}{\score'(\mb 0, a(\tau)) - \score(\mb 0, \mb 0)}dt \nonumber\\
            &\quad \quad + \int_{\tau_{\epsilon / 2}}^\tau\inner{\nabla_a Q(\mb 0, \mb 0)}{\score'(\mb 0, a(\tau)) - \score(\mb 0, \mb 0)}dt,\nonumber\\
            &\quad = \tau_{\epsilon / 2} \inner{\nabla_a Q(\mb 0, \mb 0)}{\frac{\|\score(\mb 0, \mb 0)\|}{\|\nabla_a Q(\mb 0, \mb 0)\|} \nabla_a Q(\mb 0, \mb 0)}\\
            &\quad \quad + \int_{\tau_{\epsilon / 2}}^\tau\inner{\nabla_a Q(\mb 0, \mb 0)}{\score'(\mb 0, a(\tau)) - \score(\mb 0, \mb 0)}dt,\nonumber\\
            &\quad \ge \tau_{\epsilon / 2} M'
        \end{align}
        where $\tau_{\epsilon / 2}$ is the hitting time of $a(t)$ at $\|a(t)\| = \epsilon / 2$, $M' \coloneqq \inner{\nabla_a Q(\mb 0, \mb 0)}{\frac{\|\score(\mb 0, \mb 0)\|}{\|\nabla_a Q(\mb 0, \mb 0)\|} \nabla_a Q(\mb 0, \mb 0)} > 0$, and the second integral is non-negative by non-collinearity of $\nabla_a Q(\mb 0, \mb 0)$ and $\score(\mb 0,\mb 0)$, along with the construction of $\score'$ as convex combinations of these two vectors. Finally, we can further find some $M > 0$ such that $\tau_{\epsilon / 2} M' \ge M\tau$, since the flow of the original $a(t)$ is governed by a constant vector up to error $o(\tau)$, and thus the constructed $\epsilon$ scales at fastest linearly towards $\epsilon \to 0$ (and likewise for the modified $a_{\score'}(t)$ within $\epsilon / 2$), noting that $\epsilon$ can be chosen separately and arbitrarily for the action dynamics if $\score(\mb 0, \mb 0) = \mb 0$. This then concludes that $\ell(\tau) \ge M\tau$ for some $M > 0$, and concludes this part of the proof.

        \paragraph{2: Possible re-entry.} Suppose however we cannot find such a further reduced $\epsilon$ (this can happen for example in certain attractor systems). We then leave $\epsilon$ as-is up to this point, and similarly construct $\score'$ from \eqref{eq:proof_construction_scorenew}. Let $\mc T \coloneqq \{T_1, T_1^e, T_2, T_2^e, \ldots\}$ be a monotonically increasing set of positive times with respect to the modified score $\score'$ such that $a_{\score'}(t)$ crosses into $B_\epsilon$ at each time $T_i$ and subsequently back out at time $T_i^e$ ($T_1$ is the first re-entry of $a(t)$ \emph{after} it first leaves from $t=0$). Since $\nabla_a Q(\mb 0, \mb 0) \ne 0$, the origin is not an equilibrium, and each $T_i$ has a paired $T_i^e$.

        \emph{If $\mc T$ is empty} (note this is technically still possible, as there can still exist \emph{some} initial condition along the boundary of $B_\epsilon$ that does not re-enter), we repeat the analysis above from the existence of a non-re-enterable epsilon-ball $B_\epsilon$.

        \emph{If $\mc T$ is finite}, we can make the following decomposition of $Q^{\score'}(\mb 0, \mb 0)$ for some $k \in \bb N$:
        \begin{align}
            Q^{\score'}(\mb 0, \mb 0) &= \int_0^{T_1} \gamma^t r(s_{\score'}(t))dt + \int_{T_1}^{T_1^e} \gamma^t r(s_{\score'}(t))dt\\
            &\quad + \int_{T_1^e}^{T_2} \gamma^t r(s_{\score'}(t))dt + \int_{T_2}^{T_2^e} \gamma^t r(s_{\score'}(t))dt\nonumber\\
            &\quad + \ldots\nonumber \\
            &\quad + \int_{T_{k-1}^e}^{T_k} \gamma^t r(s_{\score'}(t))dt + \int_{T_k}^{T_k^e} \gamma^t r(s_{\score'}(t))dt + \int_{T_k^e}^{\infty} \gamma^t r(s_{\score'}(t))dt,\nonumber\\
            &= \int_0^{T_1} \gamma^t r(s_{\score'}(t))dt + \int_{T_1}^{T_1^e} \gamma^t r(s_{\score'}(t))dt\label{eq:proof_nonstoch_lastterm}\\
            &\quad + \ldots\nonumber \\
            &\quad + \int_{T_{k-1}^e}^{T_k} \gamma^t r(s_{\score'}(t))dt + \int_{T_k}^{T_k^e} \gamma^t r(s_{\score'}(t))dt \nonumber\\
            &\quad + \gamma^{T_k^e}Q^{\score'}(s_{\score'}(T_k^e), a_{\score'}(T_k^e)),\nonumber
        \end{align}
        Focusing now on the last two summands of \eqref{eq:proof_nonstoch_lastterm}, we can make the following simplifications, with explanations below:
        
        \begin{align}
            &\int_{T_{k-1}^e}^{T_k} \gamma^t r(s_{\score'}(t))dt + \int_{T_k}^{T_k^e} \gamma^t r(s_{\score'}(t))dt  + \gamma^{T_k^e}Q^{\score'}(s_{\score'}(T_k^e), a_{\score'}(T_k^e)),\\
            &\quad = \int_{T_{k-1}^e}^{T_k} \gamma^t r(s_{\score'}(t))dt + \int_{T_k}^{T_k^e} \gamma^t r(s_{\score'}(t))dt  + \gamma^{T_k^e}Q^{\score}(s_{\score'}(T_k^e), a_{\score'}(T_k^e)),\label{eq:proof_nonstoch_line1}\\
            &\quad > \int_{T_{k-1}^e}^{T_k} \gamma^t r(s_{\score'}(t))dt + \int_{T_k}^{T_k^e} \gamma^t r(s_{\score'}(t))dt  + \gamma^{T_k^e}Q^{\score}(s_{\score'_{T_k}}(T_k^e), a_{\score'_{T_k}}(T_k^e)),\label{eq:proof_nonstoch_line2}\\
            &\quad = \int_{T_{k-1}^e}^{T_k} \gamma^t r(s_{\score'}(t))dt \label{eq:proof_nonstoch_line3}\\
            &\quad \quad + \gamma^{T_k} \left(\int_{0}^{T_k^e-T_k} \gamma^t r(s_{\score'_{T_k}}(t))dt + o(\tau) +\gamma^{T_k^e - T_k}Q^{\score}(s_{\score'_{T_k}}(T_k^e), a_{\score'_{T_k}}(T_k^e))\right),\nonumber\\
            &\quad = \int_{T_{k-1}^e}^{T_k} \gamma^t r(s_{\score'}(t))dt + \gamma^{T_k} Q^\score (s_{\score'}(T_k), a_{\score'}(T_k)) + \gamma^{T_k}o(\tau),\label{eq:proof_nonstoch_line4}\\
            &\quad = \int_{T_{k-1}^e}^{T_k} \gamma^t r(s_{\score'_{T_{k-1}^e}}(t))dt + \gamma^{T_k} Q^\score (s_{\score'}(T_k), a_{\score'}(T_k))+ \gamma^{T_k}o(\tau),\label{eq:proof_nonstoch_line5}\\
            &\quad =\gamma^{T_{k-1}^e} Q^\score (s_{\score'}(T_{k-1}^e), a_{\score'}(T_{k-1}^e))+ \gamma^{T_k}o(\tau),\label{eq:proof_nonstoch_line6}
        \end{align}
        
        where $s_{\score'_{T}}(t)$ for $T < t$ is the resulting state $s(t)$ after integrating using $\score'$ until $T$, then switching to $\score$.
        \begin{enumerate}
            \item Equation \ref{eq:proof_nonstoch_line1} follows by definition of $T_k^e$ being the last time the system $a(t)$ crosses $B_\epsilon$, so $\score = \score'$ for the rest of the path and $Q^{\score'}(s_{\score'}(T_k^e), a_{\score'}(T_k^e)) = Q^{\score}(s_{\score'}(T_k^e), a_{\score'}(T_k^e))$.
            \item Equation \ref{eq:proof_nonstoch_line2} follows from the same analysis in part \ref{enum:part}, since we can now fix our analysis on the originally Taylor-approximated $Q^\score$. Note that we now need to integrate $\score'$ through the whole neighborhood $B_\epsilon$ rather than just the sub-neighborhood $B_{\epsilon / 2}$ where $\score'$ is constant, but the analysis remains the same, noting the linearity at \eqref{eq:proof_prepath_linearity} even before taking an approximation of the dynamical system's flow.
            \item Equation \ref{eq:proof_nonstoch_line3} comes from a reparamaterization of integrals, and using the analysis for \eqref{eq:proofs_local_approxQ} to get an approximation of the internal reward integral $\int_{T_k}^{T_k^e} \gamma^t r(s_{\score'}(t))dt$ that is unchanged with respect to the action dynamics $\score$ and $\score'$, with error $o(\tau)$. Note $\tau$ here is the max difference between $T_i$ and $T_i^e$; this can be arbitrarily decreased with $\epsilon$ since $\score$ is Lipschitz.
            \item Equation \ref{eq:proof_nonstoch_line4} comes from wrapping the recursive equation for the Q-function (see \eqref{eq:proof_Q_recursive}), noting we have now removed dependence on $\score'$ from time $T_k$ onwards.
            \item Equation \ref{eq:proof_nonstoch_line5} comes from the fact that, by construction of $\mc T$, $\score' = \score$ on the time interval $[T_{k-1}^e, T_k]$.
            \item Finally, \eqref{eq:proof_nonstoch_line6} is another wrapping of the recursive equation for the Q-function.
        \end{enumerate}
        The proof strategy now becomes clear: we can iteratively repeat the process of \eqref{eq:proof_nonstoch_line1} through \eqref{eq:proof_nonstoch_line6} to eventually write the RHS in terms of $Q^\score$ and a rapidly decaying $o(\tau)$ term, achieving the desired inequality.

        Repeat the above process inductively, where we conclude with the following:
        \begin{align}
            Q^{\score'}(\mb 0, \mb 0) &> \int_0^{T_1} \gamma^t r(s_{\score'}(t))dt + \int_{T_1}^{T_1^e} \gamma^t r(s_{\score'}(t))dt \\
            &\quad + \gamma^{T_{1}^e} Q^\score (s_{\score'}(T_{1}^e), a_{\score'}(T_{1}^e))+ (\sum_{i=2}^k\gamma^{T_i})o(\tau),\nonumber\\
            &> \int_0^{T_1} \gamma^t r(s_{\score'}(t))dt + \int_{T_1}^{T_1^e} \gamma^t r(s_{\score'}(t))dt \\
            &\quad + \gamma^{T_{1}^e} Q^\score (s_{\score'_{T_1}}(T_{1}^e), a_{\score'_{T_1}}(T_{1}^e)) +(\sum_{i=2}^k\gamma^{T_i})o(\tau),\nonumber\\
            &= \int_0^{\tau} \gamma^t r(s_{\score'}(t))dt + \int_\tau^{T_1} \gamma^t r(s_{\score'}(t))dt \\
            &\quad + \gamma^{T_1} Q^\score (s_{\score'}(T_1), a_{\score'}(T_{1})) + (\sum_{i=1}^k\gamma^{T_i})o(\tau),\nonumber\\
            &= \int_0^{\tau} \gamma^t r(s_{\score'}(t))dt + \gamma^{\tau} Q^\score (s_{\score'}(\tau), a_{\score'}(\tau)) + (\sum_{i=1}^k\gamma^{T_i})o(\tau),\\
            &= Q^{\score}(\mb 0, \mb 0) + \gamma^\tau \ell(\tau) + (1 + \sum_{i=1}^k\gamma^{T_i})o(\tau).
        \end{align}
        Since $\lim_{\tau\to 0}\frac{(1 + \sum_{i=1}^k\gamma^{T_k})o(\tau)}{\gamma^\tau \ell(\tau)} = 0$, there exists some $\tau$ such that the above equation implies $Q^{\score'}(\mb 0, \mb 0) > Q^{\score}(\mb 0, \mb 0)$.

        \emph{If $\mc T$ is infinite}, we no longer have our ``base case'' for the above induction. However, by using the decaying structure of the Q-function, we can artificially create a base case. Noting that $0 \le Q^{\score}(s, a), Q^{\score'}(s, a)\le \int_0^\infty \gamma^t dt = (\log(\gamma^{-1}))^{-1}$, we can write the following for any fixed $k$ for some error term $q_k \in \R$:
        \begin{equation}
            \gamma^{T_k^e}Q^{\score'}(s_{\score'}(T_k^e), a_{\score'}(T_k^e)) = \gamma^{T_k^e}Q^{\score}(s_{\score'}(T_k^e), a_{\score'}(T_k^e)) + \gamma^{T_k^e}q_k,
        \end{equation}
        where each $|q_k| \le (\log(\gamma^{-1}))^{-1}$. Thus, we can repeat the analysis from the finite $\mc T$ case to get the following:
        \begin{equation}
            Q^{\score'}(\mb 0, \mb 0) > Q^{\score}(\mb 0, \mb 0) + \gamma^\tau \ell(\tau) + (1 + \sum_{i=1}^k\gamma^{T_i})o(\tau) + \gamma^{T_k^e}q_k.
        \end{equation}
        Since $\sum_{i=1}^\infty\gamma^{T_i} < \infty$, it follows that $\lim_{\tau\to 0}\frac{(1 + \sum_{i=1}^\infty\gamma^{T_i})o(\tau)}{\gamma^\tau \ell(\tau)} = 0$, and there exists some $\tau$ and some constant $\kappa$ such that $\gamma^\tau \ell(\tau) + (1 + \sum_{i=1}^k\gamma^{T_i})o(\tau) > \kappa > 0$, regardless of what value of $k$ is chosen. Since $\mc T$ is infinite and $q_k$ is bounded, we can choose some $k'$ such that $|\gamma^{T_{k'}^e}q_{k'}| < \kappa$. Under such chosen $k'$, it follows that $\gamma^\tau \ell(\tau) + (1 + \sum_{i=1}^{k'}\gamma^{T_i})o(\tau) + \gamma^{T_{k'}^e}q_{k'} > 0$, and thus implies that $Q^{\score'}(\mb 0, \mb 0) > Q^{\score}(\mb 0, \mb 0)$.
    
    \textbf{Aligning beyond the origin.} Suppose that $\score$ is aligned with $\nabla_a Q$ until some positive $t' > 0$. Split $Q(\mb 0, \mb 0) = \int_0^{t'-\tau} r(s(t))dt + \gamma^{t'-\tau}\left(\int_{0}^\tau \gamma^{t}r(s(t+t'-\tau))dt + \gamma^\tau Q(s(t), a(t))\right)$, and repeat the analysis above on $Q(s(t), a(t))$, noting both that the first summand is unaffected by changing $\score$ to $\score'$ and that $\tau$ can be chosen small enough for $\int_{0}^\tau \gamma^{t}r(s(t+t'-\tau))dt$ to only be dependent on state dynamics, with error $o(\tau)$.
\end{proof}

\textbf{Extensions.} A trivial extension of Theorem \ref{thm:main_nonstoch} is to any starting condition $Q(s_0, a_0)$, and from there we can also extend to Corollary \ref{corr:nonstoch_dist}, where we consider expectations over initial conditions of the form $\E_{\bb P \times \pi}Q^\score(s,a)$. Since the analysis for Theorem \ref{thm:main_nonstoch} made perturbations in strictly positive measure regions, we can repeat the above analysis to nudge $\score$ at any non-aligned point $(s, a)$ to not only conclude a strict increase in $Q$ at $(s, a)$, but in a measure-positive region around $(s, a)$, and since this change does not decrease $Q(s', a')$ at any other pair $(s', a')$ by repeating the ``aligning beyond the origin'' analysis, such constructions of updated scores $\score'$ in the proof of Theorem \ref{thm:main_nonstoch} similarly conclude Corollary \ref{corr:nonstoch_dist}.

\subsection{Proof of Theorem \ref{thm:main_stoch_max}} \label{sec:proof_stoch_main}
\begin{proof}
Firstly, let's formalize our notion of the support of this full joint distribution with the initial condition distribution and the stochastic dynamics of $s(t), a(t)$:
\begin{definition}\label{def:supp_full_dist}
    A point $(s, a)\in \R^s\times \R^a$ is in the support of the fully joint distribution between initial condition distribution $\bb P, \pi$ and stochastic dynamics of $(s(t), a(t))$ if for every $\epsilon > 0$, the probability that $a(t), s(t) \in B_{\epsilon}(s, a)$ at some finite time $t$ is strictly greater than 0.
\end{definition}

Using this definition, we then pave a clear path forward. We first repeat the first part analysis from the proof of Theorem \ref{thm:main_nonstoch}; namely, we first focus on purely $Q(\mb 0, \mb 0)$ and the case where the misalignment happens at the origin, then build up to the theorem with straightforward corollaries.

We then first focus on $Q(\mb 0, \mb 0)$ and assume that $\nabla_a Q(\mb 0, \mb 0)$ is not collinear with $\score(\mb 0, \mb 0)$, using this to construct a score $\score'$ such that $Q^{\score'}(\mb 0, \mb 0) > Q^\score(\mb 0, \mb 0)$.

We start with our new recursive equation for the Q-function for any $\tau > 0$. Note that any expectations $\E$ are taken with respects to the stochastic dynamics of $s(t), a(t)$.
\begin{align}
    Q(\mb 0, \mb 0) &= \E \int_0^\infty \gamma^t r(s(t)) dt,\\
    &= \E_{s, a} [\E [\int_0^\infty \gamma^t r(s(t)) dt \mid (s(\tau), a(\tau)) = (s, a)]],\\
    &= \E_{s, a} [\E [\int_0^\tau \gamma^t r(s(t)) dt + \gamma^\tau Q(s, a) \mid (s(\tau), a(\tau)) = (s, a)]],\\
    &= \E_{s, a} [\E [\int_0^\tau \gamma^t r(s(t)) dt \mid (s(\tau), a(\tau)) = (s, a)]] \nonumber\\
    &\quad + \gamma^\tau \E_{s, a} [\E [Q(s, a) \mid (s(\tau), a(\tau)) = (s, a)]],\\
    &= \E [\int_0^\tau \gamma^t r(s(t)) dt] + \gamma^\tau \E_{s, a}[Q(s, a) \mid (s(\tau), a(\tau)) = (s, a)].
\end{align}
Our new conditions for $\epsilon, \tau$ will be the following:
\begin{enumerate}
    \item $r(s) = r(\mb 0) + \inner{\nabla_s r(\mb 0)}{s} + o(\|s\|)$ for all $s, \|s\| \le \epsilon$,
    \item $Q(s, a) = Q(\mb 0, \mb 0) + \inner{\nabla_s Q(\mb 0, \mb 0)}{s} + \inner{\nabla_a Q(\mb 0, \mb 0)}{a} + o(\|s \colon a\|)$ for all $s, a$ such that $\|s\|,  \|a\| \le \epsilon$,
    \item $\tau$ is chosen small enough such that the trajectories $\{s(t)\}_{0\le t \le \tau}, \{a(t)\}_{0\le t \le \tau}$ are both bounded in a $\epsilon$-ball of sublinear likelihood with respect to $\tau$, notated in the following shorthand: $\P(\|s(\tau)\|, \|a(\tau)\| > \epsilon) = o(\tau)$\label{item:prob_bound},
    \item $\E s(\tau') = \tau' F(\mb 0, \mb 0) + o(\tau')$ for all $0 < \tau' \le \tau$\label{item:taylor_prob_s},
    \item $ \E a(\tau') = \tau' \score(\mb 0, \mb 0) + o(\tau')$ for all $0 < \tau' \le \tau$\label{item:taylor_prob_a}.
\end{enumerate}
Construction \ref{item:prob_bound} follows from splitting the joint SDE solution $\mb X(\tau) = \int_0^\tau \mb F( \mb X(t))dt + \int_0^\tau \mb \Sigma(\mb X(t)) dB_t$, noting that both the path integral of a Lipschitz function and the Brownian motion satisfy the condition (here we write $\mb X(t) = \left(s(t) : a(t)\right)$ and define $\mb F, \mb \Sigma$ similarly).

Constructions \ref{item:taylor_prob_s} and \ref{item:taylor_prob_a} follows from the differentiability of $m(\tau) = \E \mb X(\tau)$ at $\tau = 0$. Note we can evaluate, again noting the integral notation $\mb X(\tau) = \int_0^\tau \mb F( \mb X(t))dt + \int_0^\tau \mb \Sigma(\mb X(t)) dB_t$ and sufficient regularity conditions, that $\frac{d}{d\tau}\E \left [ \mb X(\tau) \right] = \E \mb F(\mb X(\tau))$, since $\E \left[ B_{\tau'} - B_\tau\right] = \mb 0$ for any $\tau, \tau' \ge 0$. Since $\E \mb F(\mb X(0)) = \mb F(\mb X(0))$, constructions \ref{item:taylor_prob_s} and \ref{item:taylor_prob_a} can be made by differentiability.

We now repeat the analysis in \eqref{eq:proof_prepath_linearity} to \eqref{eq:proofs_local_approxQ} to get the following:

\begin{align}
        Q(\mb 0, \mb 0) &= \E [\int_0^\tau \gamma^t r(s(t)) dt] + \gamma^\tau \E_{s, a}[Q(s, a) \mid (s(\tau), a(\tau)) = (s, a)]\\
        &= \left(\E_{\|s(\tau)\|, \|a(\tau)\| \le \epsilon} [\int_0^\tau \gamma^t r(s(t)) dt] + \gamma^\tau \E_{\|s\|, \|a\| \le \epsilon}[Q(s, a) \mid (s(\tau), a(\tau)) = (s, a)]\right) \bb P(\|s(\tau)\|, \|a(\tau)\| \le \epsilon)\\
        &\quad + \left(\E_{\|s(\tau)\|, \|a(\tau)\| > \epsilon} [\int_0^\tau \gamma^t r(s(t)) dt] + \gamma^\tau \E_{\|s\|, \|a\| > \epsilon}[Q(s, a) \mid (s(\tau), a(\tau)) = (s, a)]\right) \bb P(\|s(\tau)\|, \|a(\tau)\| > \epsilon)\nonumber\\
        &= \left(\E_{\|s(\tau)\|, \|a(\tau)\| \le \epsilon} [\int_0^\tau \gamma^t r(s(t)) dt] + \gamma^\tau \E_{\|s\|, \|a\| \le \epsilon}[Q(s, a) \mid (s(\tau), a(\tau)) = (s, a)]\right)  \bb P(\|s(\tau)\|, \|a(\tau)\| \le \epsilon)\\
        &\quad + \left((\log(\gamma^{-1}))^{-1}\right) o(\tau),\nonumber\\
        &= \left(1 - o(\tau)\right)\E_{\|s(\tau)\|, \|a(\tau)\| \le \epsilon}\int_0^\tau \gamma^t \left(r(\mb 0) + t\inner{\nabla_s r(\mb 0)}{F(\mb 0, \mb 0)}\right)dt \\
        &\quad + \bb P(\|s(\tau)\|, \|a(\tau)\| \le \epsilon) \gamma^\tau \E_{\|s(\tau)\|, \|a(\tau)\| \le \epsilon}\left[Q(\mb 0, \mb 0) + \inner{\nabla_s Q(\mb 0, \mb 0)}{s(\tau)} + \inner{\nabla_a Q(\mb 0, \mb 0)}{a(\tau)}\right]\nonumber\\
        &\quad + o(\tau),\nonumber\\
        &= \int_0^\tau \gamma^t \left(r(\mb 0) + t\inner{\nabla_s r(\mb 0)}{F(\mb 0, \mb 0)}\right)dt \\
        &\quad + \bb P(\|s(\tau)\|, \|a(\tau)\| \le \epsilon)\gamma^\tau \E_{\|s(\tau)\|, \|a(\tau)\| \le \epsilon}\left[Q(\mb 0, \mb 0) + \inner{\nabla_s Q(\mb 0, \mb 0)}{s(\tau)} + \inner{\nabla_a Q(\mb 0, \mb 0)}{a(\tau)}\right]\nonumber\\
        &\quad + o(\tau),\nonumber\\
        &= \int_0^\tau \gamma^t \left(r(\mb 0) + t\inner{\nabla_s r(\mb 0)}{F(\mb 0, \mb 0)}\right)dt \\
        &\quad + \gamma^\tau \E\left[Q(\mb 0, \mb 0) + \inner{\nabla_s Q(\mb 0, \mb 0)}{s(\tau)} + \inner{\nabla_a Q(\mb 0, \mb 0)}{a(\tau)}\right]\nonumber\\
        &\quad - \bb P(\|s(\tau)\|, \|a(\tau)\| > \epsilon) \E_{\|s(\tau)\|, \|a(\tau)\| > \epsilon}\left[Q(\mb 0, \mb 0) + \inner{\nabla_s Q(\mb 0, \mb 0)}{s(\tau)} + \inner{\nabla_a Q(\mb 0, \mb 0)}{a(\tau)}\right] \nonumber\\
        &\quad + o(\tau),\nonumber\\
        &= \int_0^\tau \gamma^t \left(r(\mb 0) + t\inner{\nabla_s r(\mb 0)}{F(\mb 0, \mb 0)}\right)dt \\
        &\quad + \gamma^\tau \left[Q(\mb 0, \mb 0) + \inner{\nabla_s Q(\mb 0, \mb 0)}{\E s(\tau)} + \inner{\nabla_a Q(\mb 0, \mb 0)}{\E a(\tau)}\right]\nonumber\\
        &\quad + o(\tau),\nonumber\\
        &= \int_0^\tau \gamma^t \left(r(\mb 0) + t\inner{\nabla_s r(\mb 0)}{F(\mb 0, \mb 0)}\right)dt\label{eq:proof_stoch_sameQdecomp} \\
        &\quad + \gamma^\tau  ( Q(\mb 0, \mb 0) + \tau\inner{\nabla_s Q(\mb 0, \mb 0)}{F(\mb 0, \mb 0)} + \tau \inner{\nabla_a Q(\mb 0, \mb 0)}{\score(\mb 0, \mb 0)} \nonumber\\
        &\quad + o(\tau).\nonumber
    \end{align}
    
    Since \eqref{eq:proof_stoch_sameQdecomp} is exactly the same as \eqref{eq:proofs_local_approxQ}, we can proceed exactly as we did in the non-stochastic case (proof of Theorem \ref{thm:main_nonstoch}), conditioning on events and using the tower rule to exploit specific path structure as we did in the non-stochastic case, i.e. by conditioning the expectation for $Q$ on the number of times each sampled path $(s(t), a(t))$ re-crosses the $\epsilon$-ball at the origin. Definition \ref{def:supp_full_dist}, along with previous analysis, ensure that any such local perturbations $\score'$ result in a strict increase of the Q-function, both at a fixed starting condition and a distribution over starting conditions.
\end{proof}

\paragraph{A note on optimization over vector fields.}
The above proofs consider optimizations over smooth collections of vector fields $\score : \R^s \times \R^a \to \R^a$, that are bounded in highest norm ($\mc C^0$ norm of $(s, a) \to \|\score(s, a)\|_2$) and are Lipschitz with some Lipschitz constant. This choice was made to give some control over the vector field space (bounded but not closed), but remain flexible enough to allow our local perturbations of vector fields towards optima. It remains an interesting line for future work to tighten the optimization space for the vector fields (e.g. fixed Lipschitz constant as well), alongside a sharper ascent rule (akin to \eqref{eq:proof_construction_scorenew}). For this paper, we wanted to simplify the optimization setting in order to focus theoretical insight on the relation of $\score$ to its Q-function action gradient $\nabla_a Q^\score$.
%%%%%%%%%%%%%%%%%%%%%%%%%%%%%%%%%%%%%%%%%%%%%%%%%%%%%%%%%%%%%%%%%%%%%%%%%%%%%%%
%%%%%%%%%%%%%%%%%%%%%%%%%%%%%%%%%%%%%%%%%%%%%%%%%%%%%%%%%%%%%%%%%%%%%%%%%%%%%%%

\end{document}